\newtheorem{theorem}{Theorem}
\newtheorem{proposition}{Proposition}
\newcommand{\nameOurAlgo}{{\scshape TEDS}}
\newcommand{\nameAlgoFord}{{\scshape IMP}}
\newcommand{\nameSDIF}{{\scshape SDIF}}
\newcommand{\nameOT}{{\scshape OT}}
\begin{document}


\title{Deinterleaving of Discrete Renewal Process Mixtures with Application to Electronic Support Measures}

\author[1]{Jean Pinsolle}
\author[2]{Olivier Goudet}
\author[1]{Cyrille Enderli}
\author[2]{Sylvain Lamprier}
\author[2]{Jin-Kao Hao}

\affil[1]{Thales DMS France SAS, 2 Av. Jean d’Alembert, 78190 Trappes, France \\
\url{{jean.pinsolle,cyrille-jean.enderli}@fr.thalesgroup.com}}
\affil[2]{LERIA, Universit\'{e} d'Angers, 2 Boulevard Lavoisier, Angers 49045, France \\
\url{{olivier.goudet,sylvain.lamprier,jin-kao.hao}@univ-angers.fr}}




\maketitle

\begin{abstract}

In this paper, we propose a new deinterleaving method for mixtures of discrete renewal Markov chains. This method relies on the maximization of a penalized likelihood score. It exploits  all available information about both the sequence of the different symbols and their arrival times. A theoretical analysis is carried out to prove that  minimizing  this score allows to recover the true partition of symbols in the large sample limit, under mild conditions on the component processes. This theoretical analysis is then validated by experiments on synthetic data. Finally, the method is applied to deinterleave pulse trains received from different emitters in a RESM (Radar Electronic Support Measurements) context and we show that the proposed method competes favorably with state-of-the-art methods on simulated warfare datasets.
\end{abstract}

\textbf{Keywords:} Deinterleaving; Renewal process; Maximum likelihood estimation; Electronic support measure; Radar.

\section{Introduction}

We consider the following problem.  A sequence  $z = (z_i)_{i \in \mathbb{Z}}$  of  symbols has been observed during a time window $\llbracket 0,T \rrbracket $, with $T \in \mathbb{N}^*$. The symbols $z_i$ of the sequence arrive at successive integer time steps $t_i \in \mathbb{Z}$, with the first symbol of the observation window being indexed as $z_0$:  $0  \leq t_0 <  t_2 < \dots \leq T$.  $t$ is the sequence of arrival times observed until time $T$. Each symbol of $z$ is drawn from a finite set  $\mathcal{A}$ (alphabet).  The underlying generative model $P$ of this sequence is assumed to be a set of $m > 0$  Markov processes $G_{\Pi}(\{P^e\}_{e \in E(\Pi)})$, 
where $E(\Pi)$ is a set of emitters and $m=|E(\Pi)|$ is the number of different emitters in this set. Each $P^e$, with $e \in E(\Pi)$, 
is an independent random process for emitter $e$, generating symbols in the sub-alphabet $A_e \subset \mathcal{A}$ with their corresponding arrival times.
$\Pi = \{A_e\}_{e \in E(\Pi)}$ 
is the partition of $\mathcal{A}$ into the sub-alphabets $A_e$, 
which are assumed to be non-empty and disjoint. Given a sample $(z,t)$  from $P$, observed until time $T$, and without prior knowledge about the number of  sub-alphabets $A_e$ (number of emitters), the deinterleaving problem of interest is to reconstruct the original partition  $\Pi = \{A_e\}_{e \in E(\Pi)}$ 
of the alphabet.

This problem is motivated by an application in the context of electronic warfare, where an  airborne ESM (Electronic Support Measure) receiver is located in an environment composed of many radars. This receiver intercepts pulse trains from multiple transmitters over a common channel. Each pulse is characterized by a frequency, which can be associated with a symbol $z_i$ in alphabet $\mathcal{A}$ (using a clustering method in frequency space), and a time of arrival (TOA), which can be discretized and assigned an integer value $t_i$. Other parameters such as pulse duration, power, or  angle of arrival, may also be available, but in this work we restrict attention to the time and frequency information. In this context, the sub-alphabets $A_e$ correspond to the frequency channels used by the transmitters. They realize a partition of the total processed frequency band, which corresponds to the case where transmitters avoid mutual interference by controlling their access to the radio spectrum. The cases where different transmitters share common frequency channels is not addressed here. It corresponds to the case of non disjoint sub-alphabets which is out of the scope of this work. From the recorded pulse train, the aim of our problem is to determine how many radars are present in the environment and what signals are emitted by each radar.

The literature on deinterleaving  RADAR signals is abundant. It was first performed in the literature by calculating histograms of received time differences \cite{Davies1982AutomaticPF,mardia1989new} or by estimating the phases of the interleaved pulse trains \cite{conroy2000estimation}.  This type of method has recently been revisited in \cite{mottier_deinterleaving_2021}, who proposed an  agglomerative hierarchical clustering method combined with optimal transport distances. Such clustering methods deliver fast outcomes, but might fail in  environments with complex radar patterns, potentially causing a performance degradation.

To overcome these limitations, another stream of work is based on inferring mixtures of Markov chains \cite{hutchison_inferring_2004}.
In these methods, a clustering algorithm is first applied to group the different pulses into different clusters, or letters, which correspond to the alphabet $\mathcal{A}$. Then, in a second step, these letters are partitioned into different sub-alphabets to identify the different emitters that could have generated the observed  sequence of letters.

These methods  are mainly based on two types of pulse train modeling in the literature: Interleaved Markov Process (IMP) and Mixture of Renewal Processes (MRP).

Modeling with IMP was introduced for the first time in \cite{hutchison_inferring_2004}. This model does not directly take into account the arrival times of pulses, but rather considers their order of arrival. The overall interlaced sequence produced by the different transmitters and received in a single channel is modeled using a switching mechanism, whose role is to alternate the sequence of transmissions by the different emitters within the observed sequence. The inference of an IMP is generally carried out using non-parametric methods, by maximizing a global penalized likelihood score. When sub-alphabets are disjoint, i.e., each symbol can only be transmitted by a single transmitter, a  deinterleaving scheme has been proposed in \cite{seroussi2012deinterleaving}. The authors  proved  the convergence of this method under mild conditions on switching and component processes, and provided that all possible partitions of letters can be evaluated. However, as the exploration of all possible partitions grows exponentially with the number of different symbols, the evaluation of all candidate partitions may be  unfeasible in practice. Therefore, different search heuristics have recently been proposed to explore the partition space: an iterated local search in \cite{ford2020deinterleaving,seroussi2012deinterleaving}, and a memetic algorithm in \cite{MAAP_2023}. Meanwhile, \cite{minot_separation_2014} has proposed a variant of the IMP  model introduced in \cite{seroussi2012deinterleaving} for the case of non-disjoint sub-alphabets inferred with an expectation maximization (EM) algorithm. The asymptotic consistency of this deinterleaving scheme is not proven. 

These IMP models do not explicitly take into account the delay between the different signals in the deinterleaving scheme, but only their order of arrival. This feature makes them robust to disturbances that may occur during arrival time measurements, but does not take advantage of all available information. However, this information can be critical to the deinterleaving process because the times between different pulses generally follow regular, identifiable distributions.

Therefore, another category of methods has emerged in the literature based on inference of mixtures of Markov renewal processes. In these models, there is no switching mechanism, and the global generation model consists of a set of independent Markov renewal chains, or semi-Markov processes, which have been studied in detail in the discrete case, notably in \cite{barbu2009semi}.  Estimation of mixtures of semi-Markov chains has been carried out in \cite{cardot2019estimating} for continuous data or discrete data generated by a negative binomial distribution. The authors used an expectation maximization (EM) algorithm. Another method for deinterleaving a mixture of renewal processes has recently been proposed in \cite{8571273}. This model is designed for continuous data and relies solely on temporal information. The proposed model assumes that the number of emitters is known in advance and that there are no labels on individual pulses.


In this paper, we aim to achieve the best of both worlds, namely to take into account all available information, such as pulse characteristics and arrival times, by modeling a mixture of renewal processes as in \cite{cardot2019estimating}. We also aim to propose a consistent deinterleaving scheme, as it has been done for  IMP models \cite{seroussi2012deinterleaving}, enabling the true partition to be recovered with probability tending towards one when sufficient data is provided. We would also like to take advantage of recent advances to explore the symbol partition space proposed by \cite{MAAP_2023} and applied to the deinterlacing of RADAR pulse trains. 

This paper is organized as follows. Section \ref{sec:model} presents the renewal processes mixture model. Section \ref{sec:model_estimation} describes the estimation of the parameters and the deinterleaving method to retrieve the partition of the symbols. Section \ref{sec:exp1} presents the experimental setting and a first experimental analysis of the consistency of the score. Section \ref{sec:benchmarks} reports on empirical results of the proposed method compared to the state of the art. Section \ref{sec:conclusion} discusses the contribution and presents some perspectives for future work.

\section{Mixture of renewal processes \label{sec:model}}

The underlying generative model $P$ of a  sequence $(z,t)$  is assumed to be a set $G_{\Pi}(\{P^e\}_{e \in E(\Pi)})$ of  $m > 0$ independent Markov processes.

\subsection{Markov renewal chains \label{sec:renewal_chains}}

Given a sequence $z$ of symbols in the alphabet $\mathcal{A}$, for each process $P^e$ associated to the sub-alphabet $A_e$, we use $z[A_e]$ , or just $z^e$, to present  the sub-string of the sequence $z$ obtained by deleting all symbols not in $A_e$ and $t[A_e]$ (or just $t^e$) the corresponding sequence of arrival times.

Each process $P^e$, with $e \in E(\Pi)$, 
is modeled as a Markov renewal chain  $(Z^e,T^e)$   with:

\begin{itemize}
   
    \item the chain $Z^e$   with state space $A_e$, generating a sequence of symbols $(z^e_l)_{l \in \mathbb{Z}}$.
    \item the chain $T^e$ generating the strictly increasing arrival times $t^e_l$  of all $z^e_l$.

\end{itemize}

For each process $P^e$, we introduce also the chain $X^e = (x_l^e)_{l \in \mathbb{Z}}$, corresponding to the sojourn time in each state of the chain $Z^e$. For all $l \in \mathbb{Z}$,  $x_l^e = t^e_l - t^e_{l-1} > 0$.

In the context of Radar Electronic Support Measurement (RESM), the arrival times of the individual pulses, and consequently the interval between them, are multiples of integers. The arrival times are quantified by the digital receiver, with a time resolution corresponding to the Least Significant Bit (LSB). Thus every time measurement is multiple of this LSB. Therefore, for each letter $a$ in a sub-alphabet $A_e$,  we assume that each sojourn time is drawn in a finite  set $K^a$ of strictly positive integers. In the following, for a given emitter $e \in E(\Pi)$,  we also note $K_e=\bigcup_{a \in A_e} K^a$  as the union of sojourn time sets of all symbols from $A_e$.

Each chain $(Z^e,T^e)$ satisfies for all $l \in \mathbb{Z}$ the following markov assumption: 
\begin{equation}
\label{eq:renewal_chain}
P^e((z^e_{l},t^e_{l}) | (z^e_{<l},t^e_{<l}))  = P^e(z^e_{l}  | z^e_{l-1}) \times P^e(t^e_{l} - t^e_{l-1}| z^e_{l-1}).
\end{equation}
where $(z^e_{<l},t^e_{<l})$ is the history of emission of transmitter $e$ before event $l$. 

Thus, after a given event at $t^e_l$,  the next symbol $z^e_{l+1}$   is independent from the chosen delay $x^e_{l+1}$. Also the delay  $x^e_{l+1}$ only depends on the previously emitted symbol $z^e_l$. 

We make these assumptions in order to reduce the number of parameters to be estimated and therefore speed up the likelihood calculations, although in practice this is not always true, since some agile radar waveforms may consist of a periodically repeated synchronized time-frequency pattern. In some other cases,  \eqref{eq:renewal_chain} may hold at least approximately. We assume (and empirically confirm in our context) that the likelihood expression derived from  \eqref{eq:renewal_chain} is robust enough to produce good deinterleaving performance in practice.

Moreover, we assume that each process $P^e$ is time homogeneous. Therefore  $P^e(z^e_{l+1} | z^e_{l})$  and $P^e(t^e_{l+1} - t^e_{l} | z^e_{l} )$ are independent from $l$.

We denote by $\mathbf{P}^e = (p^e_{i,j})$ the transition matrix of $Z^e$, with for $i,j \in A_e$ 
$p^e_{i,j} = P^e(z^e_{l+1} = j | z^e_{l} = i), l \in \mathbb{N}$, and for $l \in \mathbb{N}$, $i \in A_e$, $k \in K^i$, we introduce  the quantity 
$q^{e}_i(k) = P^e(x^e_{l+1} = k | z^e_{l} = i)$. \\

\noindent
{\bf Assumption $(\mathcal{P})$.}
We assume that $p^e_{i,j} > 0$ for  $i,j \in A_e$.
\\

\noindent
{\bf Assumption $(\mathcal{Q})$.}
For each $i \in A_e, k \in K^i$, we assume that $q^{e}_i(k) > 0$.
\\

Assumptions $(\mathcal{P})$ and $(\mathcal{Q})$ allow us to establish the uniqueness of generative model representations. The proof of Theorem 1 is given in the appendix.

\begin{theorem}
\label{theorem1}
Given a generative model $P = G_{\Pi}(\{P^e\}_{e \in E(\Pi)})$, corresponding to a partition $\Pi$ of the alphabet $\mathcal{A}$, under assumptions $(\mathcal{P})$ and $(\mathcal{Q})$, if  $P =G_{\Pi'}(\{P^e\}_{e \in E(\Pi')})$ for some partition $\Pi'$, then $\Pi = \Pi'$ must hold. 
\end{theorem}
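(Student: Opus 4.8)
The plan is to characterise the partition as an observable functional of the law $P$ and then argue by contradiction. Two partitions of $\mathcal A$ coincide exactly when they induce the same co‑membership relation on pairs of symbols, so if $\Pi\neq\Pi'$ there is a pair of distinct letters $a,b$ that share a block in one partition but lie in different blocks of the other; after possibly exchanging the roles of $\Pi$ and $\Pi'$, I may assume $a,b\in A_e$ for some $e\in E(\Pi)$ while $a$ and $b$ belong to distinct blocks of $\Pi'$. I would then exhibit a single quantity, computable from $P$ alone, that is forced to take incompatible values in the two representations.

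The quantity I would use is the long‑run frequency of exact temporal coincidences of $a$ and $b$. Writing $N_a\subset\mathbb Z$ for the (random) set of time steps at which the letter $a$ is emitted and $\lambda_a$ for its asymptotic rate, set $f_{ab}=\lim_{T\to\infty}\frac1T\,\#\{\,n\in\llbracket 0,T\rrbracket : n\in N_a\cap N_b\,\}$. On the one hand, in the $\Pi$‑representation $a$ and $b$ are produced by the single chain $(Z^e,T^e)$; since its arrival times $t^e_l$ are strictly increasing, at most one symbol of $A_e$ is emitted at any given instant, and as the sub‑alphabets are disjoint $a$ and $b$ are emitted by no other emitter, so $N_a\cap N_b=\varnothing$ and $f_{ab}=0$. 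On the other hand, in the $\Pi'$‑representation $a$ and $b$ are carried by two distinct, hence independent, emitters, so $N_a$ and $N_b$ are independent point processes and the ergodic theorem gives $f_{ab}=\lambda_a\lambda_b$ almost surely. Because $f_{ab}$ is a deterministic functional of the common law $P$, the two computations must agree, forcing $\lambda_a\lambda_b=0$, which is absurd as soon as both rates are positive.

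It remains to guarantee $\lambda_a,\lambda_b>0$, and this is exactly where the hypotheses enter: under Assumption $(\mathcal P)$ each transition matrix $\mathbf P^e$ is strictly positive, hence the finite chain $Z^e$ is irreducible and positive recurrent, every state is visited with positive asymptotic frequency, and Assumption $(\mathcal Q)$ fixes strictly positive sojourn probabilities, so each letter occurs at a strictly positive rate; thus $\lambda_a\lambda_b>0$ and the contradiction is complete, yielding $\Pi=\Pi'$. I expect the main obstacle to be precisely this measure‑theoretic bookkeeping: phrasing ``same emitter versus independent emitters'' as one representation‑independent scalar, justifying the almost‑sure ergodic limits for $f_{ab}$ and for the rates, and checking that the stationary (bi‑infinite) versions of the two representations genuinely share the law $P$. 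Should one prefer to avoid coincidences altogether, the same dichotomy can be run with the minimal‑delay correlation $\mathbb P[\,b\text{ at }n+\min K^a\mid a\text{ at }n\,]$, which the renewal structure pins to $p^e_{a,b}\,q^e_a(\min K^a)>0$ under $(\mathcal P)$ and $(\mathcal Q)$ when $a,b$ share an emitter, but which must collapse to the unconditional rate $\lambda_b$ under independence; the coincidence version above is simply the cleanest incarnation of this idea.
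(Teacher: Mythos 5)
Your central dichotomy --- exact coincidences of $a$ and $b$ are impossible when they share an emitter but should occur when they are carried by independent emitters --- is sound and close in spirit to what the paper does. However, the step where you invoke the ergodic theorem to conclude $f_{ab}=\lambda_a\lambda_b$ almost surely is a genuine gap. Theorem 1 assumes only $(\mathcal{P})$ and $(\mathcal{Q})$, not the gcd assumption $(\mathcal{K})$, which the paper introduces later precisely because it is what guarantees ergodicity (it is used for Proposition 2 and Theorem 2, never for Theorem 1). Under $(\mathcal{P})$ and $(\mathcal{Q})$ alone the component processes may be periodic in time, and the product of two independent, individually ergodic but periodic processes need not be ergodic; the coincidence frequency is then neither almost surely constant nor equal to $\lambda_a\lambda_b$. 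Concretely, take two emitters in $\Pi'$, one emitting only $a$ with $K^a=\{2\}$ and one emitting only $b$ with $K^b=\{2\}$ (both $(\mathcal{P})$ and $(\mathcal{Q})$ hold, $(\mathcal{K})$ does not): the relative phase of the two point processes is a shift-invariant random variable, and $f_{ab}$ equals $1/2$ or $0$, each with probability $1/2$, rather than $\lambda_a\lambda_b=1/4$. In particular $f_{ab}$ can vanish with positive probability under the independent representation, so your conclusion "$\lambda_a\lambda_b=0$" does not follow.

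The gap is repairable, but not by almost-sure limits: compare first moments instead, e.g. $\mathbb{P}[\,n\in N_a\cap N_b\,]=\mathbb{P}[\,n\in N_a\,]\,\mathbb{P}[\,n\in N_b\,]=\lambda_a\lambda_b>0$ under independence versus $0$ under co-blocking, or simply compare the probability of the event that at least one coincidence occurs. Even then you need the bi-infinite law to be stationary so that $\mathbb{P}[\,n\in N_a\,]=\lambda_a>0$ at every fixed $n$ --- exactly the bookkeeping you flag but leave unresolved, and which the paper never establishes either. The paper's proof sidesteps all of this by arguing only about supports of finite patterns: in the representation where $a$ and $b$ are split, the emitter of $a$ can repeat $a$ an arbitrary number of times at the minimal delay $k_{\min}\in K^a$ (positive probability by $(\mathcal{P})$ and $(\mathcal{Q})$), the independent emitter of $b$ must then emit $b$ somewhere inside this long window, and $b$ lands either exactly on an occurrence of $a$ (impossible in the co-blocked representation, since arrival times within one emitter are strictly increasing) or strictly between two $a$'s spaced $k_{\min}$ apart (also impossible there, since $b$ cannot follow $a$ after a delay smaller than $k_{\min}$). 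By allowing both outcomes, the paper never needs exact coincidences to carry positive probability, which is precisely the delicate point your route leans on. Finally, your closing alternative via the minimal-delay correlation does not work as stated: it produces two positive numbers, $p^e_{a,b}\,q^e_a(\min K^a)$ and $\lambda_b$, and nothing forces them to differ, so no contradiction follows; you would again have to look at delays strictly smaller than $\min K^a$ (including zero), which is the paper's argument.
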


\section{Deinterleaving Scheme}
\label{sec:model_estimation}

In this section, the estimation of the different components of the proposed discrete renewal process mixture is described and
 a consistent negative log-likelihood score is derived.

\subsection{Semi-Markov Chain Estimation}
For each transmitter $e$ with sub-alphabet $A_e$, and given a sequence $(z,t)$  observed during a time window $\llbracket 0,T \rrbracket$, we define the following quantities:

\begin{itemize}
    \item $N^e_i(T)$, the number of symbols $i$ in the observed sequence $z^e=z[A_e]$ until time $T$.
    \item $N^e_{i,j}(T)$, the number of transitions from $i$ to $j$, observed in the sequence $z^e$ until time $T$.
    \item $N^e_i(k,T)$, the number of transitions  observed from $i$ to any other letter in $z^e$ until time $T$, with sojourn time equal to $k$.
    \item $N^e(T)$, the total number of symbols in $z^e$ . 
		
\end{itemize}

For a transmitter $e$, let $(z^e,t^e)=(z^e_i, t^e_i)_{i=0}^{N^e(T)-1}$ the ordered sequence of events generated by $e$ during the observation window according to  \eqref{eq:renewal_chain}. Let also $(z^e_{-1}, t^e_{-1})$, with $t^e_{-1}< 0$, be the last event generated by $e$ directly preceding the observation window. The conditional likelihood of the observed sequence $(z^e,t^e)$ during $\llbracket 0,T \rrbracket$ given $(z^e_{-1}, t^e_{-1})$ is defined as:

\begin{equation}
\label{exact_likelihood}
     \mathcal{L}_-^T(z^e,t^e)
     = P^e(z^e_0|z^e_{-1}) \frac{P^e(t^e_{0} - t^e_{-1}|z^e_{-1})}{R^e_{-1}(|t^e_{-1}| - 1)} 
      \prod_{l = 1}^{N^e(T)-1} P^e(z^e_{l}  | z^e_{l-1}) P^e(t^e_{l} - t^e_{l-1}|z^e_{l-1})  R^e_{N^e(T)-1}(u^e_T),
\end{equation}

\noindent where $R_i(.)$, the survival function of sojourn time in state $i$, is defined for $k \in \mathbb{N}$ by $R_i(k) = P(x^e_{i+1} > k | z^e_{i})$, and $u^e_T = T - t^e_{N^e(T)-1}$. 

However, $(z^e_{-1},t^e_{-1})$ is unknown, and its bayesian estimation would imply a complex marginalization, intractable in our deinterleaving setting. Rather, we rely on the following approximation, which ignores start and end 
of the observation window without biasing the process (see proposition 6 in the appendix): 

\begin{equation}
\label{eq:approach_likelihood}
    \mathcal{L}^T_{\sim}(z^e,t^e) 
      = \prod_{l = 1}^{N^e(T)-1} P^e(z^e_{l}  | z^e_{l-1}) P^e(t^e_{l} - t^e_{l-1}|z^e_{l-1})
 \end{equation}

For each transmitter $e$, we define the empirical estimators of the coefficients $p^e_{i,j}$ of the transition matrix $\mathbf{P}^e$ and  the empirical estimators of the coefficients $q^e_i(k)$ of the distributions of the sojourn times, respectively   by 
$\hat{p}^{e,\sim}_{i,j}(T):= \frac{N^e_{i,j}(T)}{N^e_i(T)}$ and 
$\hat{q}^{e,\sim}_{i} (k,T):= \frac{N^e_i(k,T)}{N^e_i(T)}$, if $N^e_i(T) \neq 0$. If $N^e_i(T) = 0$, we set $\hat{p}^{e,\sim}_{i,j}(T):= 0$ and $\hat{q}^{e,\sim}_{i} (T,k):=0$.

\begin{proposition} \label{prop:estimator}
The estimators ${\hat{p}^{e,\sim}_{i,j}(T)}$ and $\hat{q}^{e,\sim}_{i}(k,T)$ maximize  the approached log-likelihood function $\text{log}(\mathcal{L}^T_{\sim}(z^e,t^e))$.
\end{proposition}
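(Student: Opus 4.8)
The plan is to rewrite $\log(\mathcal{L}^T_{\sim}(z^e,t^e))$ as an explicit function of the free parameters $\{p^e_{i,j}\}$ and $\{q^e_i(k)\}$ by grouping identical factors in the product of \eqref{eq:approach_likelihood}. Since $P^e(z^e_l \mid z^e_{l-1}) = p^e_{z^e_{l-1},z^e_l}$ and $P^e(t^e_l - t^e_{l-1}\mid z^e_{l-1}) = q^e_{z^e_{l-1}}(t^e_l - t^e_{l-1})$, every factor appearing in the product is one of the parameters, and by the very definition of the counts each parameter $p^e_{i,j}$ (resp. $q^e_i(k)$) occurs exactly $N^e_{i,j}(T)$ times (resp. $N^e_i(k,T)$ times). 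Taking logarithms therefore yields
$$\log \mathcal{L}^T_{\sim}(z^e,t^e) = \sum_{i,j \in A_e} N^e_{i,j}(T)\,\log p^e_{i,j} \;+\; \sum_{i \in A_e}\sum_{k \in K^i} N^e_i(k,T)\,\log q^e_i(k).$$
The first observation is that this objective splits into a part depending only on the transition parameters and a part depending only on the sojourn-time parameters, so the two families may be optimized independently; moreover each part decouples across the conditioning symbol $i$, since for distinct $i$ the rows $p^e_{i,\cdot}$ and the distributions $q^e_i(\cdot)$ live on separate probability simplices.

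Next, for each fixed $i$ I would solve the elementary constrained problem
$$\max \sum_{j \in A_e} N^e_{i,j}(T)\,\log p^e_{i,j} \quad\text{subject to}\quad \sum_{j \in A_e} p^e_{i,j} = 1, \ \ p^e_{i,j} \ge 0,$$
together with its exact analogue for $q^e_i$ over $k \in K^i$. This is the classical maximum-likelihood problem for a multinomial law: the objective is concave on the simplex, and a Lagrange-multiplier computation (equivalently, Gibbs' inequality / nonnegativity of the Kullback--Leibler divergence) shows that the maximizer is the normalized count, $p^{e,\star}_{i,j} = N^e_{i,j}(T)/\sum_{j'} N^e_{i,j'}(T)$ and $q^{e,\star}_i(k) = N^e_i(k,T)/\sum_{k'} N^e_i(k',T)$. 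It then remains to identify the normalizers with $N^e_i(T)$, i.e.\ to establish the bookkeeping identities $\sum_{j \in A_e} N^e_{i,j}(T) = N^e_i(T)$ and $\sum_{k \in K^i} N^e_i(k,T) = N^e_i(T)$: both count the transitions emanating from an occurrence of $i$ in the sub-string $z^e$, since each such transition contributes to exactly one pair $(i,j)$ and carries exactly one sojourn value $k$. Substituting then gives precisely the claimed estimators $\hat{p}^{e,\sim}_{i,j}(T)$ and $\hat{q}^{e,\sim}_i(k,T)$.

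The main obstacle is not analytical but combinatorial: one must track the boundary of the observation window, because a symbol $i$ appearing as the last event of $z^e$ initiates no counted transition. I would handle this by exploiting that the product in \eqref{eq:approach_likelihood} runs only over $l = 1,\dots,N^e(T)-1$, so that $N^e_{i,j}(T)$ and $N^e_i(k,T)$ are defined consistently with the terms actually present, and the normalizer $N^e_i(T)$ is to be read as the number of departing occurrences of $i$ — which is exactly what makes the row-sum identities hold and validates the identification with $N^e_i(T)$. Finally I would dispose of the degenerate case $N^e_i(T) = 0$: then symbol $i$ contributes no term to the log-likelihood, its parameters are unconstrained by the data, and the convention $\hat{p}^{e,\sim}_{i,j}(T) = \hat{q}^{e,\sim}_i(k,T) = 0$ is a harmless vacuous choice, which completes the argument.
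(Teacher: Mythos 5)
Your proposal is correct and follows essentially the same route as the paper's own proof: rewrite $\log(\mathcal{L}^T_{\sim}(z^e,t^e))$ as $\sum_{i,j} N^e_{i,j}(T)\log p^e_{i,j} + \sum_i \sum_{k} N^e_i(k,T)\log q^e_i(k)$ and maximize each multinomial row on its simplex via Lagrange multipliers, yielding the normalized counts. Your explicit treatment of the window-boundary bookkeeping (the last occurrence of a symbol starting no transition) and of the degenerate case $N^e_i(T)=0$ is a minor refinement of details the paper passes over silently, not a different argument.
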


When using the estimators $\hat{p}^{e,\sim}_{i,j}(T)$ and $\hat{q}^{e,\sim}_{i} (k,T)$, the maximum of the approached conditional likelihood of the sequence $(z^e,t^e)$ is denoted $\widehat{\mathcal{L}}^T_{\sim}(z^e,t^e)$.

Given an observed sequence $(z^e,t^e)$, generated by transmitter $e$ and observed until time $T$, we denote by $H^T(z^e,t^e) = - \text{log}(\widehat{\mathcal{L}}^T_\sim(z^e,t^e))$, the approached  maximum negative log-likelihood for transmitter $e$.

We have
\begin{align}
\label{eq:global_entropy}
H^T(z^e,t^e) = H^T_{Z}(z^e,t^e)  + H^T_{X}(z^e,t^e),
\end{align}
\noindent with $H^T_{Z}(z^e,t^e)$ the term due to the state transitions for transmitter $e$,
\begin{equation}
\label{eq:entropy_symbols}
H^T_{Z}(z^e,t^e) 
= - \sum_{i,j \in A_e} N^e_{i,j}(T) \text{log} \ \frac{N^e_{i,j}(T)}{N^e_i(T)},
\end{equation}
\noindent and $H^T_{X}(z^e,t^e)$ the term due to the distribution of sojourn time for transmitter $e$:
\begin{align}
\label{eq:entropy_time}
H^T_{X}(z^e,t^e)  
= - \sum_{i \in A_e}  \sum_{k \in K^i} N^e_i(k,T) \ \text{log} \ \frac{N^e_i(k,T)}{N^{e}_{i}(T)}.
\end{align}

By convention in \eqref{eq:entropy_symbols}, given $T > 0$, if for some $i$ and $j$, we have $N^e_{i,j}(T) =0$, i.e. there is no  transition from $i$ to $j$ in the observed sub-sequence $z^e$ until time $T$, then we set $$N^e_{i,j}(T) \text{log} \ \frac{N^e_{i,j}(T)}{N^e_i(T)} := 0.$$  In the same way, in  \eqref{eq:entropy_time}, if for $i \in A^e$, and $k \in K^i$, $N^e_i(k,T) = 0$, we set $$N^e_i(k,T) \ \text{log} \ \frac{N^e_i(k,T)}{N^{e}_{i}(T)} := 0.$$

\subsection{Global model estimation and deinterleaving scheme \label{sec:deinterleaving}}
As all transmitters are independent, the  \textit{global} approached likelihood of the  sequence $(z,t)$ observed until time $T$, and related to a partition of symbols $\Pi = \cup_{e \in E(\Pi)} A_e$, is given by

\begin{equation}
   \mathcal{L}^T_{\sim,\Pi}(z,t) =  \prod_{e \in E(\Pi)}   \mathcal{L}^T_{\sim}(z^e,t^e).
\end{equation}

We use $H^T_{\Pi}(z,t) = - \text{log} \ \widehat{\mathcal{L}}^T_{\sim,\Pi}(z,t)$ to denote the corresponding  empirical entropy. Thus, we have,

\begin{equation}
\label{eq:independance}
 H^T_{\Pi}(z,t) =  \sum_{e \in E(\Pi)}  H^T(z^e,t^e),
\end{equation}

 \noindent with $H^T(z^e,t^e)$
given by Equation \eqref{eq:global_entropy}.

Given a sequence $(z, t)$ observed during the time window $\llbracket 0,T \rrbracket$, the proposed deinterleaving scheme corresponds to finding the partition 
minimizing a penalized entropy 
score function of the form: 
 
\begin{equation}
\label{eq:score_global}
    C^T_{\Pi}(z,t) =  H^T_{\Pi}(z,t) + \gamma   m  \text{log} (n),
\end{equation}

\noindent {with $\gamma$} a non-negative (penalization) constant,  $m$ the number of transmitters and $n$ the size of the observed sequence $z$ until time $T$. Such penalization in $\log(n)$ \cite{gideon} aims to take into account the fact that the number of parameters, related to the number of transmitters, varies with the considered partition. Given a candidate partition $\Pi$, if simultaneous emission of different symbols from the same alphabet $A_e$ occurs in the observed sequence, the partition $\Pi$ is considered incompatible with the process that generated the sequence (the same process cannot emits several symbols at the same time). In that case, 
we arbitrarily set $C^T_{\Pi}(z,t):=+\infty$ to discard that wrong  candidate partition. \\

\noindent
{\bf Assumption $(\mathcal{K})$.} Let $K_e = \bigcup_{z \in A_e} K^z $ be the set of all sojourn times for transmitter $e$. For all transmitter   $e \in E(\Pi)$, associated to the underlying generative process $G_{\Pi}(\{P^e\}_{e \in E(\Pi)})$, 
the greatest common divisor (gcd) of the set $K_e$ is equal to 1.\\

Under this additional assumption, we show with the following theorem that the deinterleaving scheme allows to retrieve the \textit{true} partition when $T$ goes toward infinity,  with an exhaustive search in the space of all possible partitions of  the alphabet $\mathcal{A}$. This proof is given in the appendix. Hypothesis $(\mathcal{K})$ is quite relevant for radar emitters: to resolve ambiguities on pulse repetition interval range measurement, pulse repetition intervals are often taken purposely coprime which in turn translates into coprimed sojourn time.

\begin{theorem}
\label{theorem:convergencemodel}

     Let  $P_\Pi = G_{\Pi}(\{P^e\}_{e \in E(\Pi)})$ and let $\Pi^{'}$ be a partition of $\mathcal{A}$ such that $\Pi^{'} \neq \Pi$. Then, if $P_\Pi$ verifies  assumptions $(\mathcal{Q})$, $(\mathcal{K})$ and $(\mathcal{P})$, we have 
    \begin{equation}
       \underset{T \rightarrow \infty}{\text{lim}} \mathbb{P}_{(z,t) \sim P_\Pi} \left( \frac{1}{T} ( C^T_{\Pi'}(z,t) - C^T_{\Pi}(z,t)) >  0\right) =1.
    \end{equation}
    
    This is also true for sequences  $(z,t)$ sampled from $P_{\Pi|\Pi'}$, with $P_{\Pi|\Pi'}$ a process restricting sequences from $P_{\Pi}$ to the subset of possible sequences for $P_{\Pi'}$. %
\end{theorem}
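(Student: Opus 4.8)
The plan is to normalize by $T$ and reduce the statement to a comparison of limiting entropy rates. Setting $m'=|E(\Pi')|$, I would begin from the identity
$$\frac{1}{T}\bigl(C^T_{\Pi'}(z,t)-C^T_{\Pi}(z,t)\bigr)=\frac{1}{T}\bigl(H^T_{\Pi'}(z,t)-H^T_{\Pi}(z,t)\bigr)+\frac{\gamma(m'-m)\log n}{T}.$$
Under $(\mathcal{P})$ and $(\mathcal{K})$ every component chain is irreducible and aperiodic, hence positive recurrent, so the number of emitted symbols grows linearly, with $n/T$ converging to a positive constant; consequently $\log n/T\to 0$ and the penalty term is asymptotically negligible. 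It therefore suffices to show that $\tfrac{1}{T}(H^T_{\Pi'}-H^T_{\Pi})$ is bounded below by a strictly positive constant with probability tending to one.

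I would then invoke the ergodic theorem for the Markov renewal chain --- irreducibility coming from $(\mathcal{P})$ and aperiodicity from $(\mathcal{K})$ --- to pass from empirical counts to stationary frequencies. The normalized counts $N^e_{i,j}(T)/T$ and $N^e_i(k,T)/T$ converge almost surely, so that $\tfrac{1}{T}H^T_{\Pi}\to h_{\Pi}$, the true entropy rate of $P_{\Pi}$, since the model is correctly specified and the estimators $\hat p,\hat q$ are consistent. For a competing partition the same counts, now aggregated according to $\Pi'$, converge to pseudo-true parameters and $\tfrac{1}{T}H^T_{\Pi'}\to h_{\Pi'}$, the entropy rate of the best-fitting mixture of independent Markov renewal processes with alphabet partition $\Pi'$. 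The whole matter is thus the strict inequality $h_{\Pi'}>h_{\Pi}$.

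I would establish this through a dichotomy on the relation between $\Pi'$ and $\Pi$. If $\Pi'$ is \emph{not} a refinement of $\Pi$, some block of $\Pi'$ gathers symbols produced by at least two distinct true emitters; these being independent aperiodic renewal processes, the renewal theorem assigns each integer time a positive limiting occupation probability, so the two processes share an emission instant with probability tending to one over $\llbracket 0,T\rrbracket$ --- it is precisely $(\mathcal{K})$ that forbids the emission times from lying on disjoint sublattices. Such a coincidence places two distinct symbols at the same instant in one $\Pi'$-block, hence a zero sojourn time, forcing $C^T_{\Pi'}=+\infty$ and the inequality trivially. If instead $\Pi'$ is a refinement of $\Pi$ with $\Pi'\neq\Pi$, it splits the alphabet of some genuine emitter; here I would use Gibbs' inequality at the level of entropy rates, writing $h_{\Pi'}=h_{\Pi}+\bar D\bigl(P_{\Pi}\,\|\,Q^{*}_{\Pi'}\bigr)$, where $\bar D$ is the Kullback--Leibler divergence rate of $P_{\Pi}$ from its best approximation $Q^{*}_{\Pi'}$ in the $\Pi'$-model class. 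This rate is non-negative and vanishes only if $P_{\Pi}$ coincides with some $G_{\Pi'}(\{P^e\})$, which Theorem~1 excludes whenever $\Pi'\neq\Pi$; hence $\bar D>0$ and $h_{\Pi'}>h_{\Pi}$.

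The core technical step is to justify the divergence-rate identity: one must confirm that $-\tfrac{1}{T}\log\widehat{\mathcal L}^T_{\sim,\Pi'}$ converges to the cross-entropy rate against $P_{\Pi}$ and that $\bar D$ is strictly positive for the whole \emph{process} rather than for fixed-length blocks, which is where ergodicity and the uniqueness Theorem~1 act together; the split case is especially delicate, since collapsing one chain into sub-chains turns conditional sojourn times into compound inter-occurrence gaps, and the aperiodicity granted by $(\mathcal{K})$ is needed to keep these aggregated distributions non-degenerate. I expect the principal obstacle, however, to be the conditioned statement with $P_{\Pi|\Pi'}$: there the simultaneity escape is unavailable by construction, the feasible set on which we condition may be asymptotically atypical, and one must show both that this conditioned measure remains ergodic enough to possess a well-defined entropy rate and that conditioning never supplies the factorized structure a $\Pi'$-model demands, so that the strict divergence-rate inequality --- and hence the conclusion --- survives.
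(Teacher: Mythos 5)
Your high-level architecture --- kill the penalty term, reduce to a comparison of entropy rates, dispatch merging partitions by forced simultaneous emissions, and handle splitting partitions by a Gibbs-type strict inequality tied to Theorem~1 --- matches the paper's strategy in spirit, and the merge/simultaneity observation is sound for the unconditioned statement. But the two steps you lean on hardest are exactly the ones the paper's finite-state-machine machinery exists to supply, and you leave both as announcements. In the split case, Gibbs' inequality only gives $\bar D \geq 0$; your claim that $\bar D = 0$ would force $P_\Pi$ to \emph{equal} some $G_{\Pi'}(\{Q^e\})$ (so that Theorem~1 yields a contradiction) is not automatic, because a vanishing Kullback--Leibler divergence \emph{rate} between two stationary processes does not in general imply that the processes coincide. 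That implication does hold when both processes are Markov with full-support kernels on a \emph{common} finite state space, since then the rate decomposes as $\sum_s \pi(s)\,D(P(\cdot|s)\,\|\,Q(\cdot|s))$ and its vanishing forces kernel equality on the recurrent support; but the competing $\Pi'$-model for a split block is a renewal chain over compound inter-occurrence gaps on a \emph{different} state space, so no such decomposition exists until one builds a common representation. This is precisely what the paper constructs: states $(i,d)$ recording the last symbol and elapsed time, the common-refinement FSM $\mathcal{F}^+$, its ergodicity (Propositions 2 and 4, where $(\mathcal{K})$ together with $(\mathcal{Q})$ delivers aperiodicity), and then an explicit exhibition of pairs of states that agree on the $\Pi$-coordinate but differ on the $\Pi'$-coordinate, at which the hazard-based conditional laws cannot simultaneously match. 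That concrete state-wise argument, not Theorem~1 invoked abstractly, is what makes the divergence strictly positive.

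Second, the conditioned statement for $P_{\Pi|\Pi'}$ is half of the theorem, and it is where your dichotomy collapses: under that conditioning the simultaneity escape is unavailable, so the merge case also requires the strict divergence inequality, and the conditioned process is no longer a product of independent renewal chains, so even its ergodicity and the existence of the limiting rates must be re-established. You name this as ``the principal obstacle'' and stop there. The paper resolves it by constructing the restricted FSM $\mathcal{F}^*$ (deleting transitions infeasible for either partition, and states that cannot be feasibly revisited), proving that it remains ergodic --- aperiodicity survives because after any delayed emission a transmitter recovers its full gcd-$1$ delay set --- and then running the same state-wise positivity argument on $\mathcal{F}^*$, which covers merge and split cases uniformly; it also needs the boundary-term control of Proposition~6 to pass from the exact to the truncated likelihood, a point your outline skips. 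As written, your proposal correctly identifies the obstacles but supplies none of the constructions that overcome them, so it is a plan rather than a proof.
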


The second part of the theorem  shows that the result not only holds due to the incompatibility of partitions. Even if, for instance, no simultaneous emission of different symbols occurs in the observed sequence (which can be rather unlikely for some scattered sets of emission delays), our deinterleaving process is able to identify the true partition if the observed sequence is sufficiently long.

This result is valid for all non-negative values of the penalization constant $\gamma$ in  (\ref{eq:score_global}), and in particular when $\gamma = 0$, as it is experimentally confirmed in Section \ref{sec:exp1} using synthetic data generated according to the model described in Section \ref{sec:model}. However, we experimentally observe in practice, as shown in Section \ref{sec:benchmarks}, that using $\gamma > 0$  can improve the results when applying the deinterleaving scheme in the ESM domain, whose incoming data does not exactly match  the proposed model.

\subsection{Solving the Combinatorial Problem in the Space of Partitions}
\label{sec:opti}

Given an alphabet $\mathcal{A}$ and a sample $(z,t)$  observed until time $T$, finding the partition $\Pi$ minimizing the score $C^T_{\Pi}$ given by  \eqref{eq:score_global} requires to solve a combinatorial problem in the search space of the alphabet partitions given by

\begin{equation}
\Omega_{\mathcal{A}}  =   \{\Pi | \mathcal{A} = \bigcup_{e \in E(\Pi)} A_e   \wedge  \forall (e,e') \in E(\Pi)^2, e \neq e' \implies A_e \cap A_{e'} = \emptyset\} . 
\end{equation}

The size of this search space can be calculated exactly with  the Bell number:
\begin{equation}
\label{eq:Bell}
B_{|\mathcal{A}|} = \sum_{k=0}^{|\mathcal{A}|} \frac{1}{k!} \sum_{i=0}^{k} (-1)^i \binom{k}{i} (k-i)^{|\mathcal{A}|}.
\end{equation}

For a small number of letters (less than 10), as the size of $\Omega_{\mathcal{A}}$ remains reasonable (less than $B_{10} = 115975$), 
we can use an exhaustive search in the space of all partitions. This can be done efficiently using the algorithm proposed in \cite{knuth1973art}.

However, since this space search grows exponentially with the number of letters, an exhaustive search is in general not feasible in a reasonable amount of time.

Therefore, when the number of letters is greater than 10, we propose to use an heuristic to find the best partition $\Pi$ in the huge search space $\Omega_{\mathcal{A}}$.
We adapted the recent memetic algorithm MAAP, which was previously used for a deinterleaving problem in \cite{MAAP_2023}, seen as a particular grouping problem. 
Like MAAP, our proposed algorithm, called \nameOurAlgo\ for \textit{Temporal Estimation Deinterleaving Scheme}, uses a population of two different candidate solutions (partitions). As described in Algorithm \ref{alg:maap}, it alternates between two phases during the search. In the first phase, a local search procedure (called TabuAP) is performed to improve the partitions in the population. In the second phase, crossovers (GLPX) are performed between partitions to generate new candidate partitions, which are further improved by the TabuAP procedure in the next generation. The algorithm is efficiently implemented with incremental evaluation techniques, taking advantage of the fact that the likelihood score of each sub-alphabet $A_i$ can be calculated independently. The appendix  gives more details on the implementation of the TabuAP and GLPX procedures.

\begin{algorithm}

\caption{TEDS -Temporal Estimation Deinterleaving Scheme (inspired by MAAP)}
\label{alg:maap}
\begin{algorithmic}[1]
\State \textbf{Input}: sequence $(z,t)$  observed until time $T$, with $z$ drawn from alphabet $\mathcal{A}$.
\State \textbf{Output}: Best partition $\Pi_{best}$ of $\mathcal{A}$ found so far.
\State $\Pi_1, \Pi_2, \Pi_{best} \leftarrow$ random draws in partition space  $\Omega_{\mathcal{A}}$
\While {stop condition is not met do}
    \State $\Pi_1'\leftarrow \text{TabuAP}(\Pi_1,(z,t))$
    \State $\Pi_2'\leftarrow \text{TabuAP}(\Pi_2,(z,t))$
    \If {$C^T_{\Pi_1'}(z,t) < C^T_{\Pi_{best}}(z,t)$}
        \State  $\Pi_{best} \leftarrow \Pi_{1}'$
    \EndIf
    \If {$C^T_{\Pi_2'}(z,t) < C^T_{\Pi_{best}}(z,t)$}
       \State $\Pi_{best} \leftarrow \Pi_{2}'$
    \EndIf
    \State $ \Pi_1 \leftarrow \text{GLPX}(\Pi_1', \Pi_2',(z,t))$ 
    \State $ \Pi_2 \leftarrow \text{GLPX}(\Pi_2', \Pi_1',(z,t))$  
\EndWhile
\State return $\Pi_{best}$
\end{algorithmic}
\end{algorithm}

 \section{First experimental analysis \label{sec:exp1}}

 The first experiment studies the ability of the deinterleving scheme to retrieve the \textit{true} partition, when the data are generated according to the \textit{ideal} framework described in Section \ref{sec:model}, with a collection of independent renewal processes, and when it is possible to perform an exhaustive search in the space $\Omega_{\mathcal{A}}$ of the partitions. We first describe the data generating process used for these experiments. Then we present an experimental analysis of the consistency of the score.

 \subsection{Synthetic dataset generation}
 \label{sec:datageneration}
 
 The datasets are based on synthetic sequences $(z,t)$
of size $n$ generated by a set of $m$ renewal processes  $ G_{\Pi}(\{P^e\}_{e \in E(\Pi)})$.  
Given a length $n$ and an alphabet $\mathcal{A}$, a \textit{scenario}, i.e a sequence $(z,t)$, is generated as described below. Each parameter is randomly drawn from uniform distributions on the specified interval or space.
 
 \begin{itemize}
     \item From an alphabet $\mathcal{A}$, a ground truth partition  $\Pi_{truth} = \{A_e\}_{e \in E(\Pi_{truth})}$ of the alphabet $\mathcal{A}$  is drawn in the search space  $\Omega_{\mathcal{A}}$. $m = |E(\Pi_{truth})|$ is the number of transmitters associated with this partition.
     \item For each transmitter  ${e \in E(\Pi_{truth})}$,  
     with corresponding sub-alphabet $A_e$, a probabilistic transition matrix $\mathbf{P}^e = (p^e_{i,j})$ of size $|A_e| \times |A_e|$ is drawn with non-zero coefficients in order to satisfy assumption $(\mathcal{P})$.
    \item For each symbol $a$ in $\mathcal{A}$, a number of sojourn time states is drawn in $\llbracket 1, K \rrbracket$, then different sojourn times $k_a$  are  drawn in the interval $\llbracket 1, L \rrbracket$, such that assumption $(\mathcal{K})$ is verified.
    $K$ and $L$ are two  hyper-parameters of the generator, set up to the values  $m+1$ and $|\mathcal{A}| + 1$ respectively (arbitrary choice). Then, for each symbol $a$ in $\mathcal{A}$, a probability $q_{a}(k_a)$  of  each sojourn time $k^a$, is drawn such as   $\sum_{k^a \in K^a} q_{a}(k_a) = 1$, and $\forall k^a \in K^a, q_{a}(k_a) > 0$, in order to meet assumption $(\mathcal{Q})$.
     \item An initial state symbol $z^e_0$ of each process $P^e$ is drawn in $A_e$.
   \item For each transmitter $P^e$ a sequence $(z^{e}_l,t^{e}_l)_{l \in \mathbb{N}}$ is generated using the initial state $z^e_0$, the transition matrix $\mathbf{P}^e$ and the distribution of sojourn times for each symbol $a \in A_e$.
     \item The $m$  sequences $(z^{e}_l,t^{e}_l)$ are merged to build a sequence $(z,t)$ of size $n$, with increasing times of arrival $0 \leq t_0 \leq t_1 \leq  \dots \leq t_{n-1} \leq T$.
 \end{itemize}
 
Note that the number of transmitters is drawn between 1 and the number of symbols $|\mathcal{A}|$ (see first point), but it is not equivalent to a uniform random draw in $\llbracket 1,|\mathcal{A}| \rrbracket$ since it depends on the topology of the partition space. Still, the probability to draw a partition with a high number of transmitters $m$ increases as the number of symbols $|\mathcal{A}|$ increases.

 Using this synthetic dataset generator, denoted as  $G(\mathcal{A},n)$,\footnote{The generated datasets are available at \url{https://github.com/JeanPinsolle/renewal_processes}.}  we first study different scenarios with different sizes $n$ and a low number of symbols. This allows us to experimentally verify that the ground truth  partition $\Pi_{truth}$ can be retrieved with probability one, when $n$ goes toward infinity and  when an exhaustive search in the space of all partitions is performed.

\begin{figure}[!t]
\centering
 \includegraphics[width=0.8\textwidth]{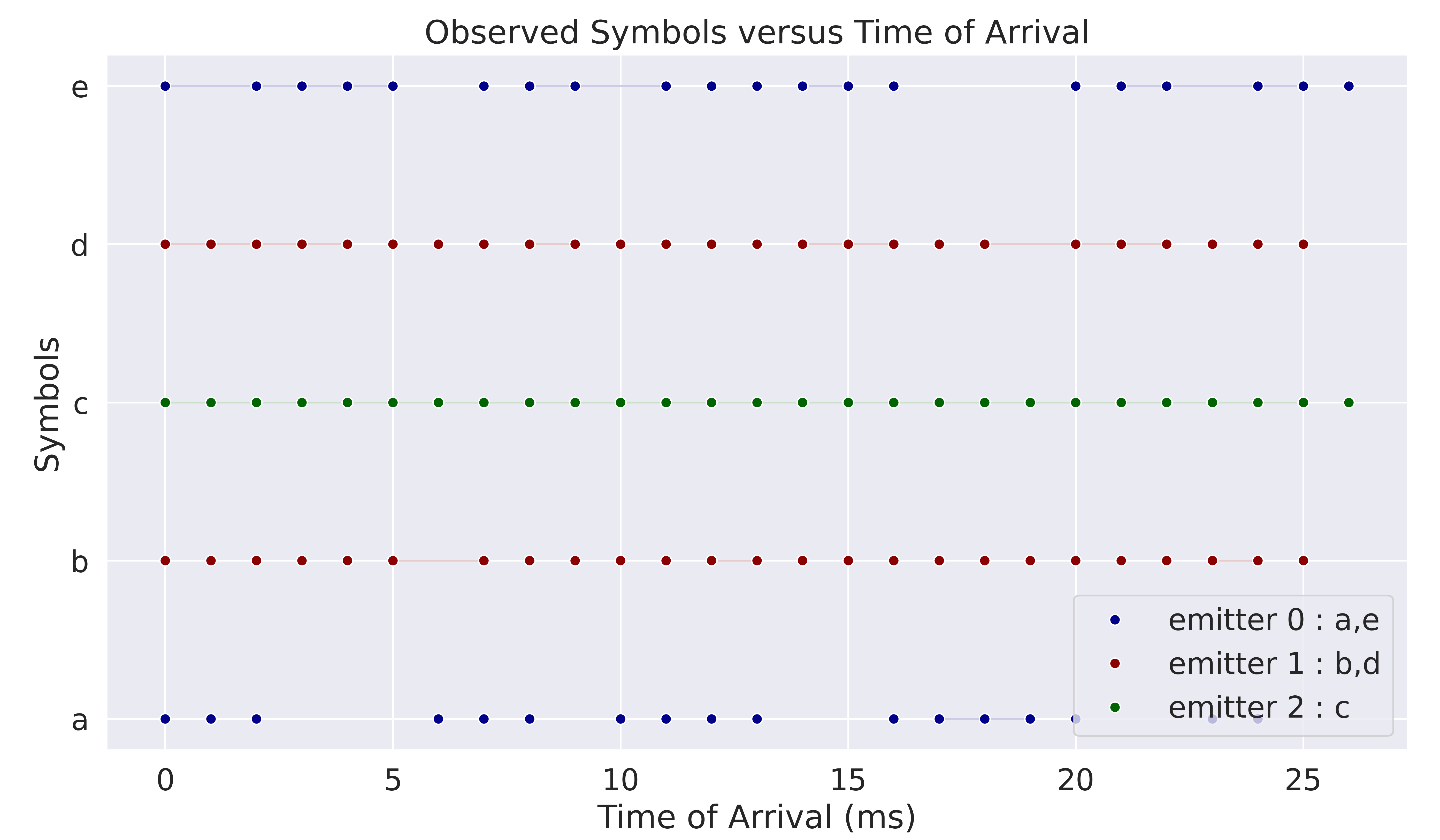}
\caption{Display of data generated for a scenario of size 200, with 5 different symbols coming from 3 different emitters. The y-axis represents the symbols, the x-axis their arrival times.  The ground truth is $\Pi_{truth} = \{a,e\} \cup \{b,d\} \cup \{c\}$: emitter 0 in blue emits symbols $a$ and $e$, emitter 1 in red emits symbols $b$ and $d$, emitter 2 in green emits symbol $c$.} \label{fig:exemple_simus}
\end{figure}

Figure \ref{fig:exemple_simus} illustrates an example of generated scenario with $|\mathcal{A}| = 5$ and $n = 200$. Here the ground truth is $\Pi_{truth} = \{a,e\} \cup \{b,d\} \cup \{c\}$, there are 3 different transmitters, each distinguished by its color.

\subsection{Empirical verification of the consistency of the score}
 \label{expe_convergence}
 
In order to analyse the robustness of our deinterleaving scheme, we study different configurations with a number of symbols $|\mathcal{A}| \in \{3,5,7,9\}$ and a length $n$ ranging in the interval $\llbracket 0, 5000 \rrbracket$. For each configuration $(|\mathcal{A}|,n)$,
1000 different datasets $(\mathcal{A}, \Pi_{truth} , z,t)$ are independently generated.

For each of these  datasets, an exhaustive search of the partition $\hat{\Pi}$ minimizing the score $C^T_{\hat{\Pi}}(z,t)$,  given by Equation \eqref{eq:score_global} and computed with $\gamma = 0$, is performed.\footnote{The number of candidate partitions are respectively  $5, 52, 877, 21147$ when the numbers of letters are respectively $|\mathcal{A}| = 3,5,7,9$   (cf. Equation \eqref{eq:Bell}). It is therefore possible to enumerate all the partitions in a reasonable amount of time in this case.} At the end of this process, if $\hat{\Pi}$ is equal to  $\Pi_{truth}$ (up to a permutation of the different sub-alphabets), we consider it a success, otherwise we consider it a failure.

Figure  \ref{fig:graphe_convergence} shows the average success rate of this procedure computed for 1000 independent scenarios for each configuration $(|\mathcal{A}|,n)$. 

These results confirm the convergence of the model even without penalization on the score (when $\gamma=0$), as stated by Theorem \ref{theorem:convergencemodel}. Indeed, we observe on this figure that for all  scenarios with different numbers of symbols, that the success rate converges toward $100\%$ when the sequence length $n$ of the observed data $(z,t)$ increases. 

We also observe a decrease in convergence speed as alphabet size increases. For alphabet sizes of $3,5,7,9$, the success rate reaches $99 \%$ for sequence lengths $n$ of $460, 875, 1920, 2395$, respectively.  This can be explained by the greater number of parameters to be estimated as the alphabet size increases.

\begin{figure}[h]
\centering
 \includegraphics[width=0.8\textwidth]{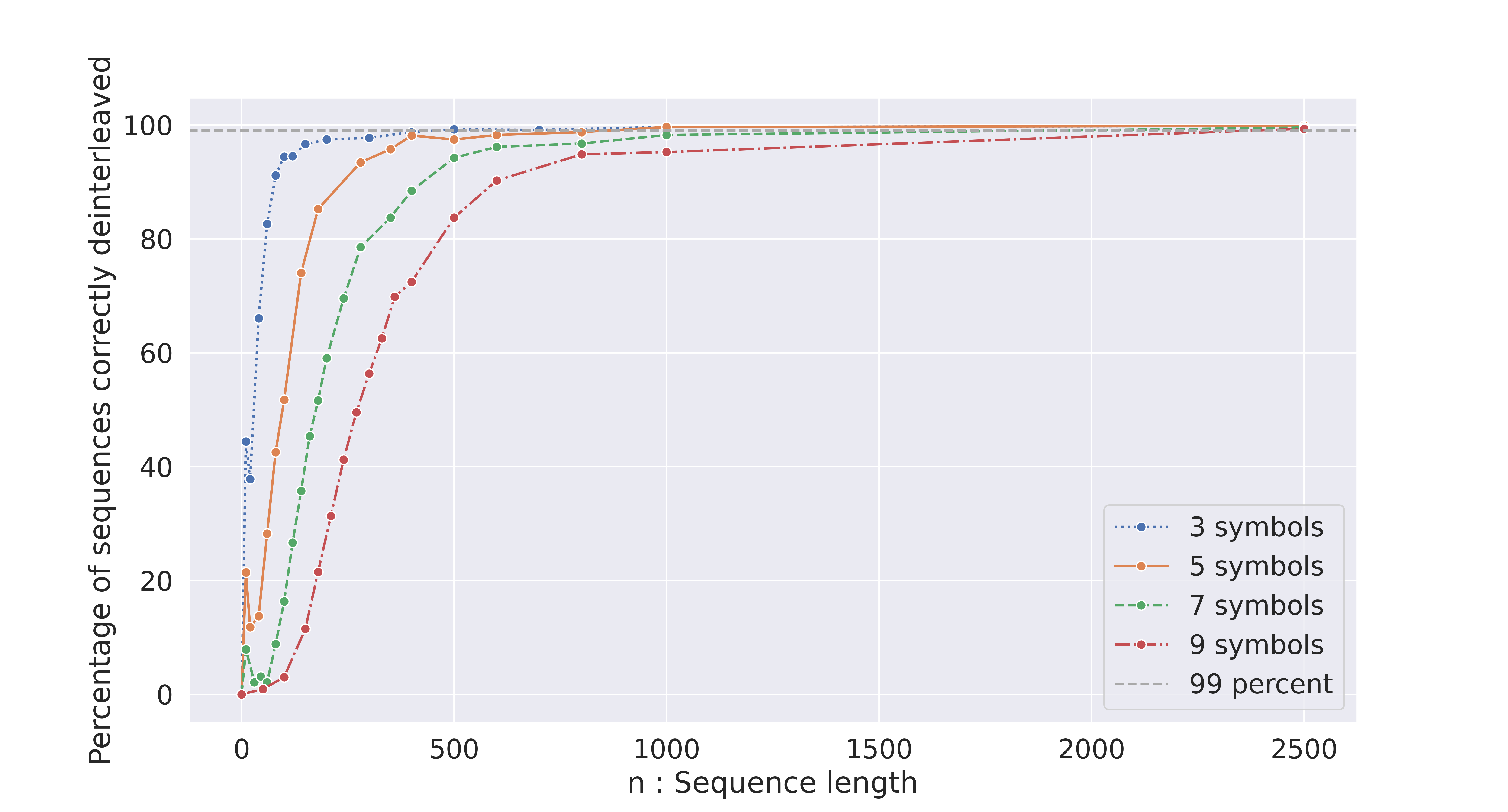}
 \caption{Average success rate of the proposed deinterleaving scheme when an exhaustive search in the partition space is performed and displayed for different numbers of symbols and different sequence sizes $n$. The horizontal line represents the $99\%$ threshold of correctly deinterlaced sequences.  \label{fig:graphe_convergence}}
\end{figure}

\section{Experimental validation}
\label{sec:benchmarks}

The goal of this section is to evaluate the performance of the proposed algorithm \nameOurAlgo\ in comparison with state-of-the-art algorithms of the literature.  The  \nameOurAlgo\ algorithm  corresponds to the evaluation of the score presented in Section  \ref{sec:deinterleaving} coupled with the memetic algorithm mentioned in Section \ref{sec:opti} for the exploration of the partition space. We first describe  the baseline algorithms, followed by  their hyper-parameter settings used in the experiments. We then report the experimental results obtained on the synthetic datasets with 5, 10, 20 and 50 symbols coming from the generator presented in the last section, followed by the results on datasets coming from an ESM data generator, which simulates realistic situations with mobile radar warning receivers.
 
\subsection{State-of-the-art algorithms  }

For comparison with the state of the art, the following algorithms have been re-implemented in Python 3 from the authors' original papers:

\begin{itemize}
 \item The improved \nameSDIF\ algorithm from \cite{liu2017improved}. SDIF is an algorithm first proposed in \cite{mardia1989new}, which is a well-known reference in the RADAR community. It aims to identify different letter sub-sequences based on the calculation of histograms of observed sojourn times. 
 
\item OT: the deinterlacing algorithm proposed in \cite{mottier_deinterleaving_2021}.
In this algorithm, a matrix of  optimal transport distance is first computed between all sub-sequences of symbols.\footnote{We used the POT Python library for this implementation  \url{https://pythonot.github.io/}.} Then, according to this distance matrix,  a hierarchical clustering method is applied to group the different sequences.
\item \nameAlgoFord: the deinterleaving scheme proposed in \cite{seroussi2012deinterleaving} with the exploration of the partition space developed by \cite{ford2020deinterleaving}. In this approach, the  underlying generative model of the  sequence is assumed to be an \textit{interleaved Markov process} $P = \mathcal{I}_{\Pi}(P_1,\dots,P_m;P_w)$, where $m > 0$ is the unknown number of different transmitters, $P_i$ is an independent component random process for transmitter $i$, generating symbols in the sub-alphabet $A_i \subset \mathcal{A}$,  $P_w$ is a random switch process over transmitters. The vector $\textbf{k} = (k_1, \dots, k_m; k_w)$ corresponds to the different orders of the  components and switch Markov processes. The authors proposed a deinterleaving scheme, given by minimizing a global cost function:

\begin{equation}
\label{pml_estimation}
(\hat{\Pi}, \hat{\textbf{k}}) = \underset{(\Pi,\textbf{k})}{argmin}\  C^I_{(\Pi,\textbf{k})}(z),
\end{equation}

\noindent  with $C^I_{(\Pi,\textbf{k})}(z)$ a global penalized  maximum likelihood (ML) entropy score  of the sequence $z$ of size $n$ (sorted by increasing times of arrival $t$) under the IMP model defined as  
\begin{equation}
\label{eq:pml_ford}
    C^I_{(\Pi,\textbf{k})}(z) =  \sum_{i=1}^m \hat{H}_{k_i}(z[A_i]) + \hat{H}_{k_w}(\sigma_{\Pi}(z)) + \beta \kappa \log n,
\end{equation}

\noindent with $\hat{H}_{k_i}(z[A_i])$ the ML entropy of each process $P_i$, $\hat{H}_{k_w}(\sigma_{\Pi}(z))$ the ML entropy of the switch process, $\beta$ a constant and $\kappa$ the number of free parameters in the model. 
\end{itemize}

\subsection{Parameter settings  }

In \nameOurAlgo,  the  penalization parameter $\gamma$  in Equation (\ref{eq:score_global}) is set to  $0$ for the first set of experiments on synthetic data (Section \ref{sec:benchmarks_synthethic}), while it is calibrated on a train set to the value of $19$ for the ESM dataset considered in Section \ref{sec:benchmarks_warfare}. 
For the tabu search procedure, the tabu tenure parameter $\alpha$ is set to the value of 0.6 (as in \cite{MAAP_2023}). The maximal number of iterations for each tabu search is set to 50. Table \ref{table:parameters}  summarizes the parameter setting for our algorithm and the different competitors.  For each algorithm and each scenario, a maximum computation time of one hour is retained.

\begin{table}[!h]
\centering
\caption{Parameter settings}
\begin{tabular}{l|l|l}
Parameter & Description & Value\\
\hline
\hline
\nameOurAlgo & & \\
$nb_{iter}$ & Nb iterations of local search  & 50\\
$\alpha$ & Tabu tenure parameter  & 0.6\\
$\gamma$ & ML Penalization parameter & 0 (sec \ref{sec:benchmarks_synthethic}), 19 (sec \ref{sec:benchmarks_warfare})\\
\hline
\hline
\nameAlgoFord\ \cite{ford2020deinterleaving} & & \\
$nb_{iter}$ & Nb iterations local search & 50\\
$r$ & Radius of random jump & 2\\
$\beta$ & ML Penalization parameter & 0.5 (sec \ref{sec:benchmarks_synthethic}), 0.1 (sec \ref{sec:benchmarks_warfare})\\
\hline
\hline
\nameOT\ \cite{mottier_deinterleaving_2021} & & \\
$s$ & Threshold silhouette score & -0.1 (sec \ref{sec:benchmarks_synthethic}), 0.2 (sec \ref{sec:benchmarks_warfare}) \\
\hline
\hline
\nameSDIF & & \\
$\epsilon$ & Precision for histograms& 0.05 \\
$X$& Threshold peaks histograms & 0.9 \\
$c$ & Threshold PRI transform & 0.5 \\
\end{tabular}
\label{table:parameters}
\end{table}

\subsection{Experiments on synthetic datasets}
\label{sec:benchmarks_synthethic}

We first consider the synthetic datasets presented in Section \ref{sec:datageneration} with alphabets $\mathcal{A}$  of size 5, 10, 20 and 50   and observed sequences of size $n=500, 2000, 5000$. Figure \ref{fig:boxplots_simus} displays with different boxplots the distribution of the V-measure scores obtained for 100 independent scenarios by each compared method (\nameSDIF, \nameOT, \nameAlgoFord,  and \nameOurAlgo)  for each configuration $(|\mathcal{A}|,n)$. 

 The V-measure score is a classical metric used to evaluate the quality of a partitioning result. It corresponds to the harmonic mean of the measures of homogeneity and completeness of the  partition of symbols  in comparison with the ground truth \cite{Rosenberg2007VMeasureAC}. The higher the scores, the better the results.
 
 \begin{figure}
 \centering

     \includegraphics[width=0.8\textwidth]{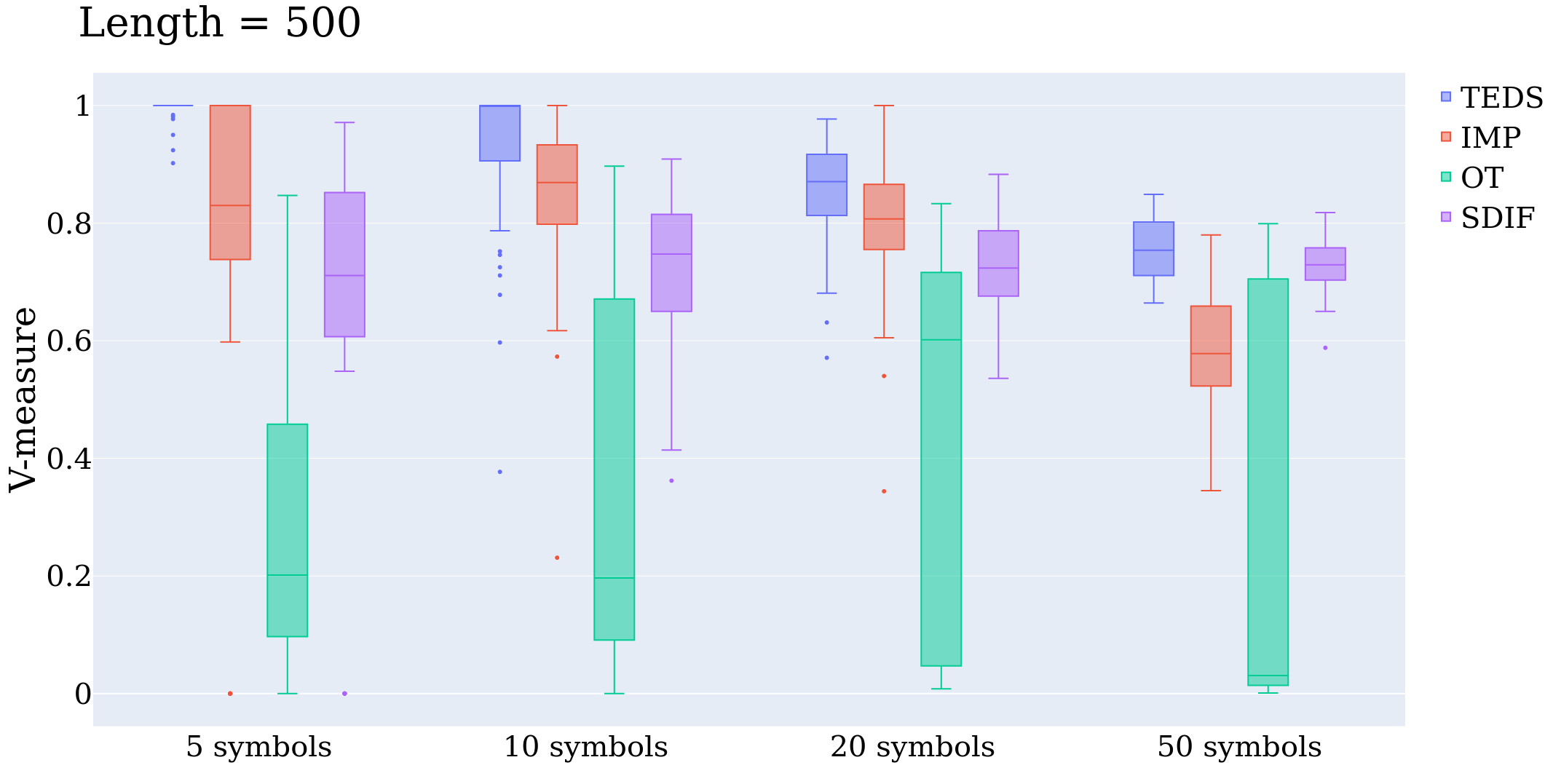}

     \includegraphics[width=0.8\textwidth]{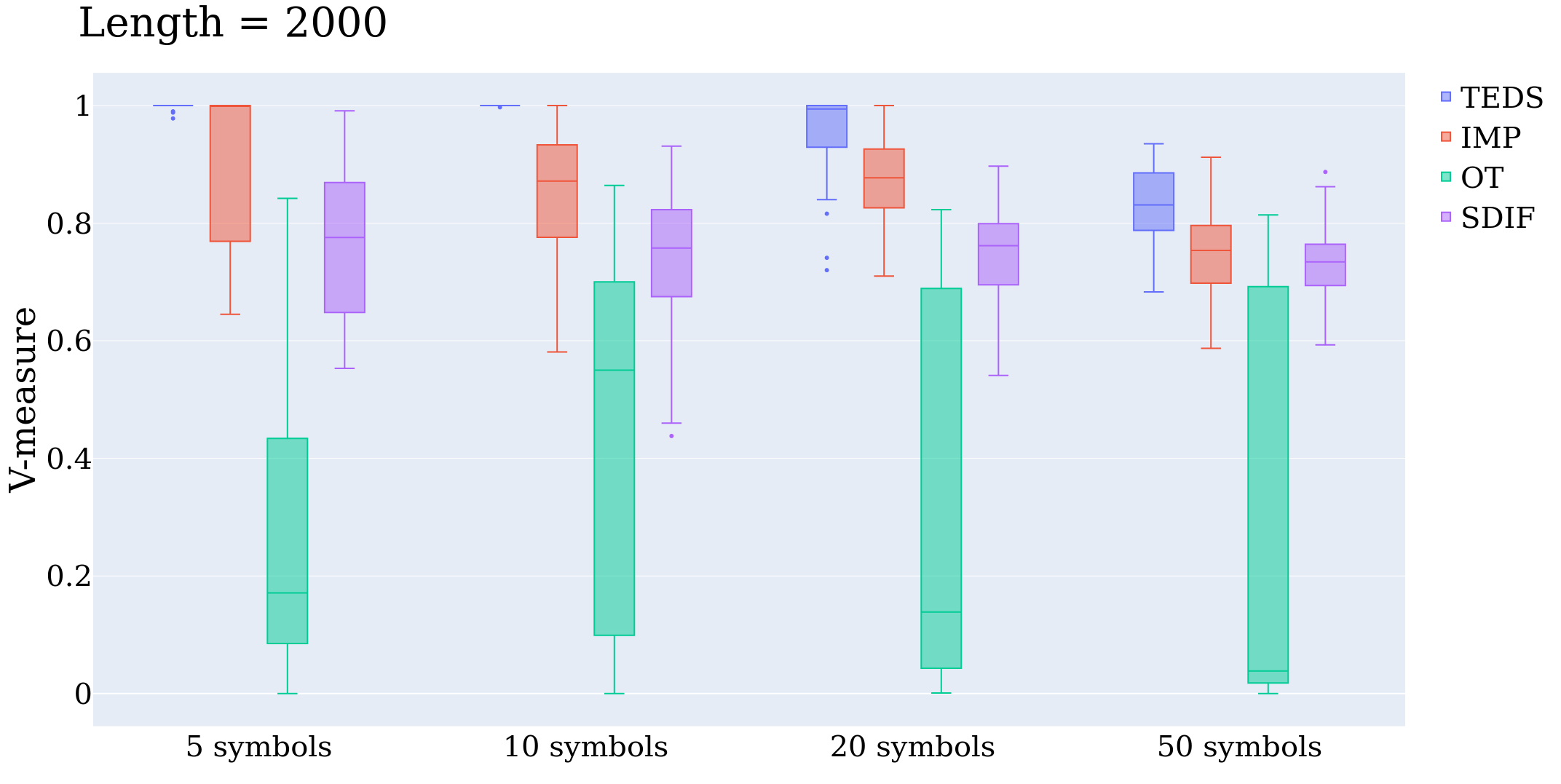}
     
     \includegraphics[width=0.8\textwidth]{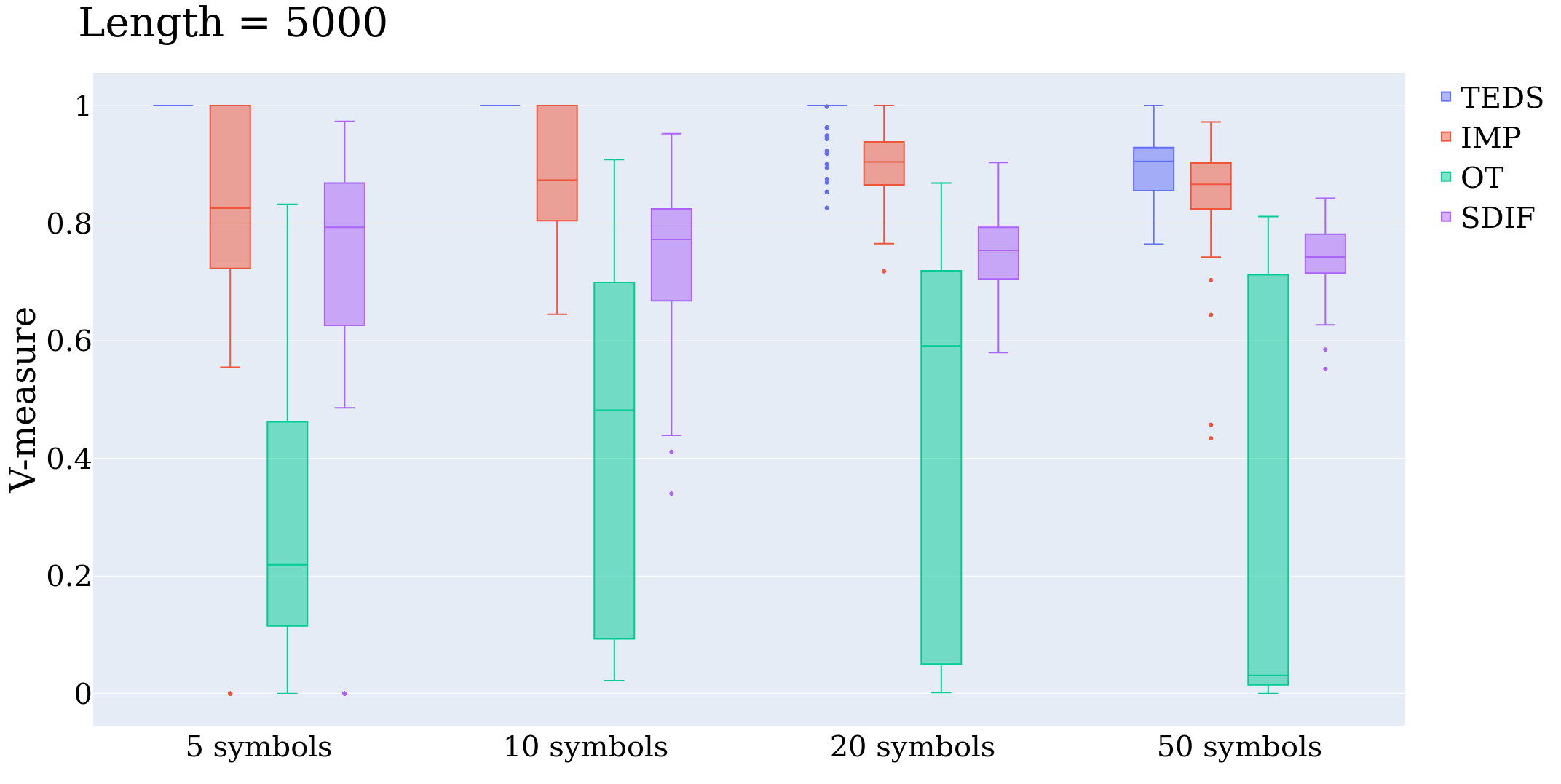}
     
   \caption{Comparison of the deinterleaving algorithms on synthetic data of size $n=500$, 2000 and 5000. Tests were made for 5, 10, 20 and 50 symbols. Each box represents the distribution of the V-measure scores obtained by the compared algorithms and computed for 100 independent scenarios.  \label{fig:boxplots_simus}}
 \end{figure}

First we observe that \nameOurAlgo\ (in blue) always returns the ground truth partition for the scenarios with 5 and 10 symbols and sequences with 5000 data points (V-measure scores always equal to 1). 
For other configurations, the number of data points is not sufficient to guarantee a consistent estimation of the score associated to each partition. Unsurprisingly, the worst results are obtained for scenarios with the highest number of symbols, $|\mathcal{A}|=50$, and the lowest numbers of data points, $n=500$. 

Overall, we observe that \nameOurAlgo\ (in blue) dominates all the competitors for all configurations.  The second best algorithm is \nameAlgoFord\ (in red), which also uses a deinterleaving scheme based on a Markov representation of the data.  Algorithms that are not based on a Markov model return partitions far from the ground truth, as evidenced by the low V-measure scores obtained by \nameSDIF\ and \nameOT\ for the different scenarios.

Regarding the computation time required to find the best partition,  \nameOurAlgo\ and \nameAlgoFord\ require the highest computing time, because  they need to compute thousands of entropy scores during the exploration of the partition space.

\subsection{Electronic Warfare experiments}
\label{sec:benchmarks_warfare}

In this section we present an application on datasets coming from an Electronic Warfare data generator which simulates realistic situations with an airborne  ESM (Electronic Support Measure) receiver in an environment composed of many radars whose number and positions can be selected in the simulation. One configuration corresponds to a random draw in a realistic radar library and a random draw in their relative phasing.
For each configuration, we generate a dataset $\mathcal{D}$ consisting in a sequence of intercepted pulse parameters, including their corresponding frequency (CF) and time of arrival (ToA). The \textit{ground truth} partition  $\Pi_{truth}$ (i.e., the association of each pulse to each transmitter) is given by the simulator but is assumed unknown. The objective is then to retrieve $\Pi_{truth}$ from the data.

\subsubsection{Data description}

Data are composed of $100$ scenarios generated with the ESM simulator. One scenario, illustrated in Figure \ref{exemple_radar}, corresponds to a manually selected window lasting few seconds in a simulation of several minutes. The windows $\llbracket 0, T \rrbracket$ during which the pulses are observed for each scenario,  were selected regarding two criteria: two different transmitters cannot transmit in the same frequency (because otherwise we are in the case of non-disjoint alphabets) and all transmitters cannot cease to transmit in the time window in order to meet as much as possible the hypothesis $(\mathcal{P})$ and $(\mathcal{Q})$ made for the data generating process (see Section \ref{sec:model}). Scenarios were generated with or without missing pulses and with or without noises, the missing rate can reach $60\%$ and missing pulses are randomly drawn according to a uniform law. The length of the sequences observed in the different scenarios ranges from $90$ to $8600$, and the number of transmitters varies from 1 to 11.

\subsubsection{Preprocessing step}
Contrary to synthetic data, the number of symbols is not fixed and is deduced from the observed data with the method explained below.

A preprocessing of the data is first performed to obtain the alphabet $\mathcal{A}$ from the dataset, as it is done in \cite{ford2020deinterleaving}. It consists in clustering pulses with the DBSCAN algorithm \cite{ester1996density} based on their frequency. Then, each  cluster obtained is associated with a symbol in $\mathcal{A}$. We then obtain  a sequence of frequency symbols $z$ of size $n$ with their corresponding arrival times $t$.

The $\epsilon$-neighborhood parameter of DBSCAN corresponds to our precision parameter and is a fixed  number of the order of the frequency measured in MHz. 
After this pulse clustering into the different groups (symbols), we obtain a number of symbols ranging from $7$ to $53$ in the different scenarios.

Figure \ref{exemple_radar} illustrates an example of  data generated with the ESM simulator. The first plot represents the frequency of the observed pulses (y-axis) according to their time of arrival (x-axis) as they are fed into the algorithm. The second plot shows the result of the pre-processing step, where the x-axis represents the arrival times, but the y-axis shows the symbols made with frequency values (frequency values are not considered anymore: only symbol grouping remains). 
Ground truth is represented, with four emitters and each one associated with one color.

\begin{figure}[!t]
\centering
\includegraphics[width=0.8\textwidth]{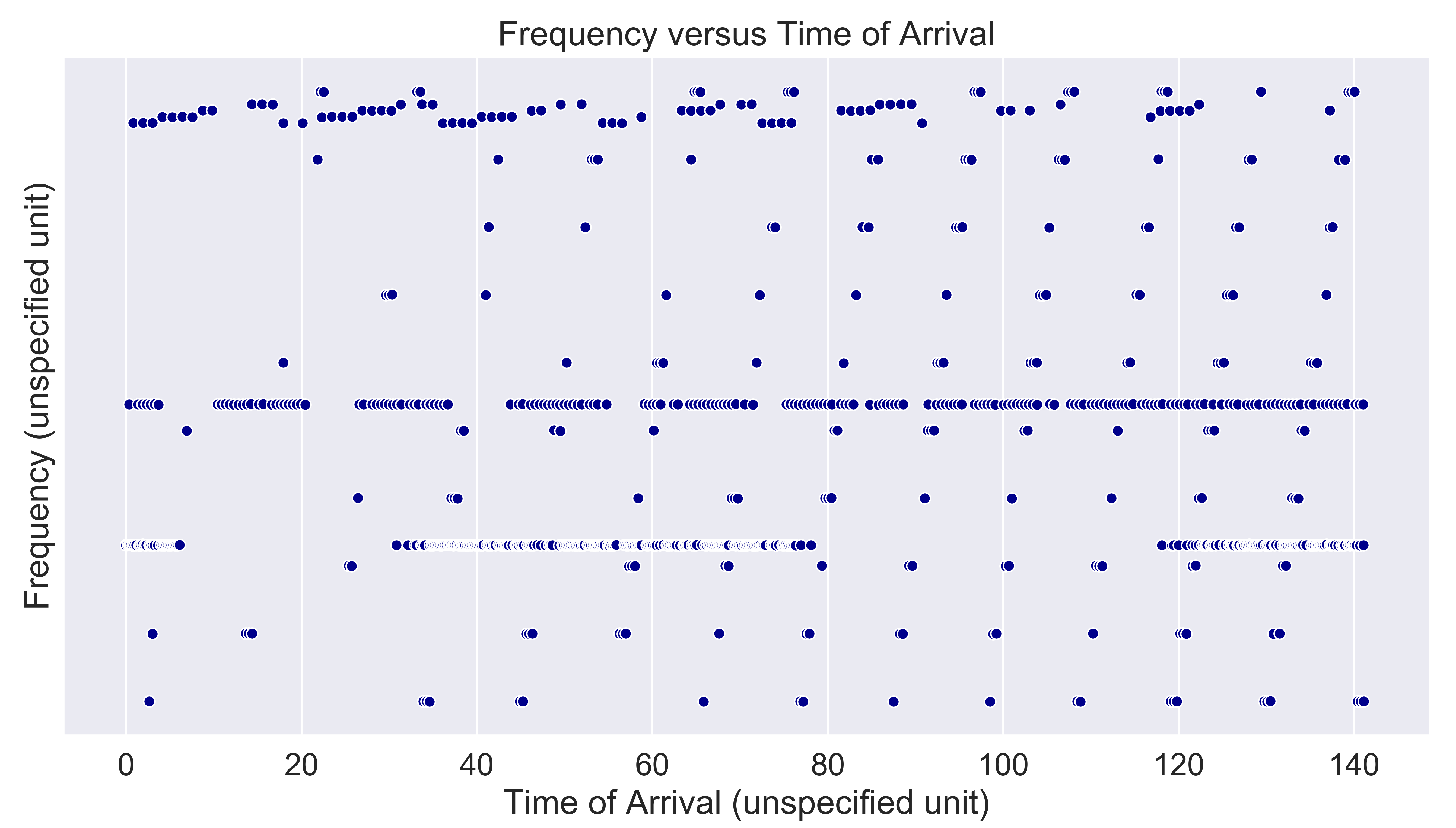}
\includegraphics[width=0.8\textwidth]{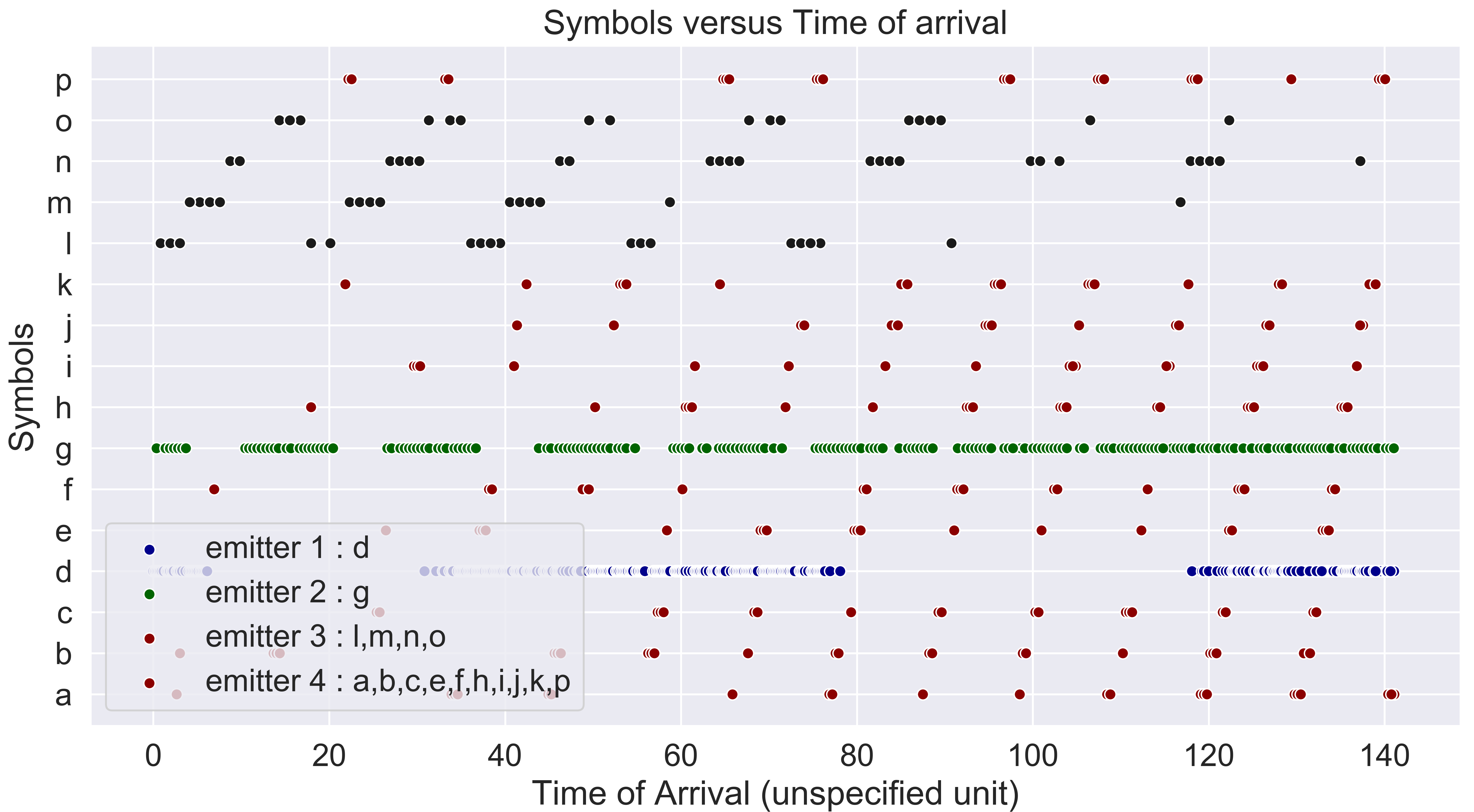}
\caption{For both plots the x-axis is the time of arrival of observed data. The y-axis in the first plot shows the frequency values as they are observed. The second plot shows the result of the preprocessing step where the y-axis represents the symbols when clustering pulses based on their frequency.  Ground truth is given: each color represents one emitter. There are four emitters. Here we can see emitter 1 (in blue) emits only symbol $d$, while emitter 4 (red) emits 10 symbols $a,b,c,e,f,h,i,j,k,p$.\label{exemple_radar}}
\end{figure}

\subsubsection{Hyperparameter calibration}

 We first performed a calibration of the main critical parameters of the different algorithms in order to maximize the average V-measure score on a train set composed of the first 50 instances.
 With this process, the penalization parameter $\gamma$ of \nameOurAlgo\ in the score computation (cf. Equation \eqref{eq:score_global}) is set to 19.  The penalization parameter $\beta$ of \nameAlgoFord\ is set to 0.1.  Three parameters were optimized for \nameSDIF, $\epsilon = 0.05$, threshold $x = 0.9$, and a second threshold needed for the PRI (Pulse Repetition Interval)  transform, $\alpha = 0.5$. 
For the OT optimal transport algorithm, the hierarchical cluster cut is defined as a function of a silhouette score with a threshold set at $s = 0.2$.
These parameters are gathered in Table \ref{table:parameters}.

\subsubsection{Validation of the performances}

With the calibrated parameters, we ran the compared algorithms on the 50 remaining validation instances. Figure \ref{boxplots_radar} displays the distribution of the V-measure scores obtained by the four algorithms on the 50 test scenarios. 

\begin{figure}
\centering
     \includegraphics[width=0.8\textwidth]{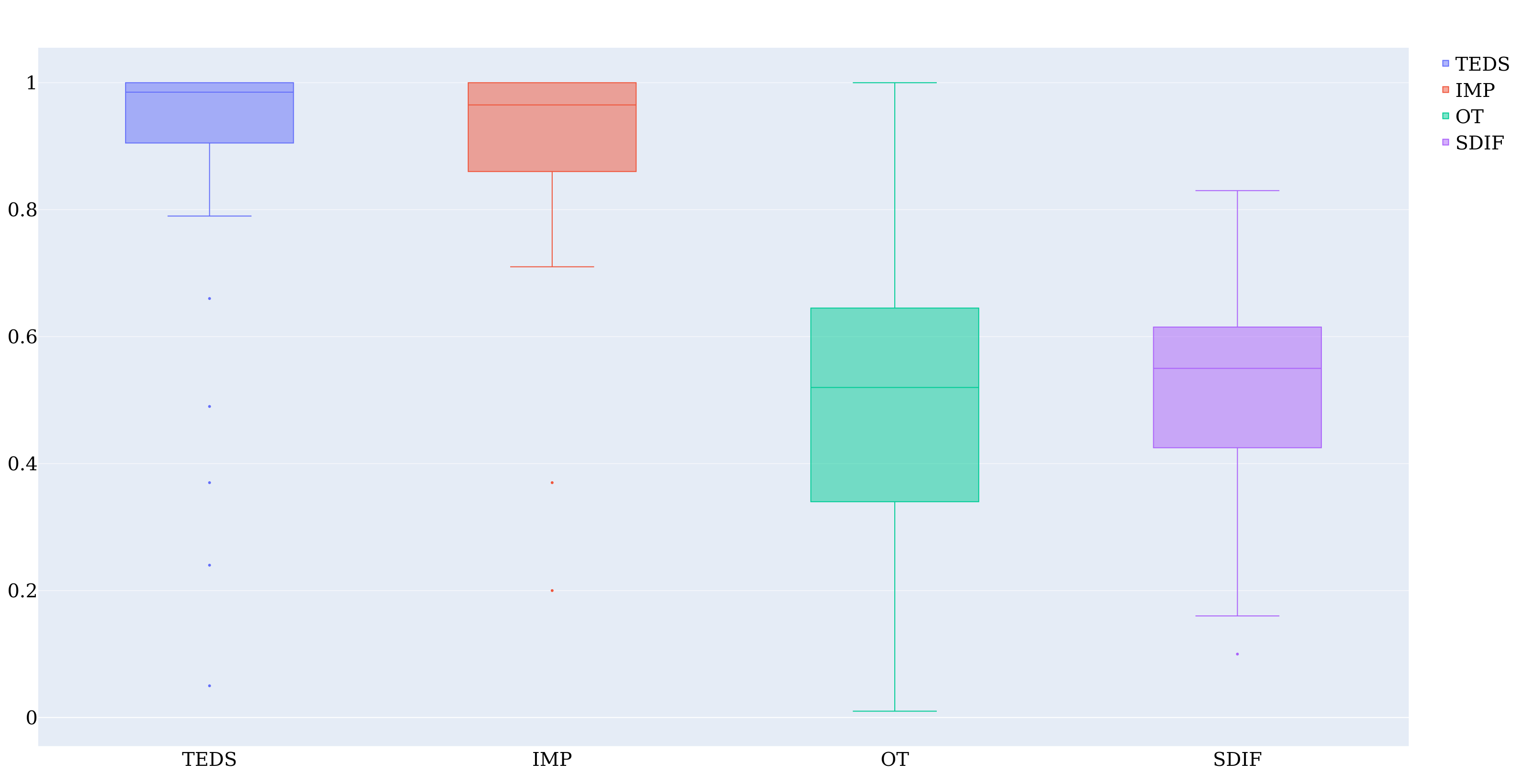}
     \caption{Comparison of deinterleaving algorithm on simulated PDWs. One box represents the distribution of the V-measure scores of an algorithm on the 50 test scenarios.  \label{boxplots_radar}}
 \end{figure}

 \nameOurAlgo\ and \nameAlgoFord\ obtain the best results, with a slightly better median obtained by \nameOurAlgo\ compared to \nameAlgoFord, even if  a  statistical test indicates that the difference in scores between \nameOurAlgo\ and \nameAlgoFord\ is not significant. \nameOurAlgo\ returned 17 perfectly deinterleaved scenarios over 50 against 16 for \nameAlgoFord. These two algorithms do not retrieve the ground truth for the same scenarios, which highlights to some extend the complementary nature of these two approaches for this type of data. Note that \nameOT\ and \nameSDIF\ make less restrictive hypothesis than we made for the data selection i.e., $(\mathcal{P)}$, $(\mathcal{Q})$ and no stop of emission. Therefore, their results do not fully reflect their true efficiency.

\section{Discussion and Perspectives \label{sec:conclusion}}

The main contribution of this paper is to propose a new deinterleaving method for interleaved pulse trains, which exploits both negative log-likelihood minimization of the symbol sequence and negative log-likelihood minimization of their sojourn times distribution.
The choices made in constructing the model (mixture of independent renewal processes, entropy evaluation and penalty term) are supported by theoretical analysis.  We have shown that minimizing the proposed score allows us to recover the true partition associated with the generation of the observed sequence in the large sample limit. This theoretical result is confirmed by an experimental analysis of the score consistency. 
A comparison with other algorithms on synthetic data and electronic warfare data from a realistic simulator confirmed that \nameOurAlgo\ is competitive with state-of-the-art algorithms for the deinterleaving task and scales favorably, in terms of computing time required, to the number of transmitters in the generative process. 

This work opens up various perspectives for future research.  First of all, we have observed that some of the assumptions made in this paper can be violated in reality. In particular, we assumed  in this work that the sub-alphabets are disjoint, which is not always the case in realistic scenarios, because  different transmitters may send pulse with the same frequency. Secondly, we have assumed that all radars in the environment constantly emit, but it may happen that a radar emits no symbol for an extended period of time. The model could take this into account by introducing, for example, a temporary "off" state for each radar, during which it emits no pulses. 

\bibliography{main_arxiv}
\bibliographystyle{plain}

\newpage

\appendix

This appendix supplement presents the proofs of the theorems and propositions  as well as the local search procedure (TabuAP) and the crossover operator (GLPX) used Algorithm 1.

\section{Proof of Theorem 1 \label{app:proof_theorem1}}

\begin{proof}
Let $\Pi'$ be a partition such that  $P = G_{\Pi'}(\{P^e\}_{e \in E(\Pi')})$.  
Let us assume that $\Pi \neq \Pi'$ and let us show that we arrive at a contradiction. 

\textit{First case.} We claim that there exist a sub-alphabet $A_{e_{\Pi'}} \in \Pi'$, such that $|A_{e_{\Pi'}}| \geq 2$ and $A_{e_{\Pi'}} \notin \Pi$. Here $e_{\Pi'} \in E(\Pi')$ denotes an emitter generating symbols in the sub-alphabet $A_{e_{\Pi'}}$.

$A_{e_{\Pi'}}$ contains at least two symbols $a$ and $b$, such that $a$ is in a sub-alphabet $A_{e_{\Pi}}\in \Pi$ and $b \notin A_{e_\Pi}$. Thus, there exists $A_{f_\Pi}\in \Pi$, with $f_\Pi \in E(\Pi)$ and $f_\Pi \neq e_\Pi$, such   that $b \in  A_{f_\Pi}$ with $A_{f_\Pi} \neq A_{e_\Pi}$.

We denote by $k_{min}$ the least possible delay between  two symbols $a$ that $P$ can generate (i.e., $k_{min}= \underset{k \in K^a}{min}(k)$). 
Since $P = G_{\Pi'}(\{P^e\}_{e \in E(\Pi')})$, $b$ cannot be emitted after $a$ in a time delay lower than $k_{min}$. 
Since $P = G_{\Pi}(\{P^e\}_{e \in E(\Pi)})$, $P$ can generate a sequence $z[A_{e_\Pi} \cup A_{f_\Pi} ] = z_{t_a}..z_{t_b}..z_{t_{a'}}$, with $z_{t_a}=a$, $z_{t_b}=b$, $z_{t_{a'}}=a$, where $t_{a'}-t_{a}=k_{\min}$ and $t_a < t_b \leq t_{a'}$,  under assumptions $(\mathcal{P})$  and $(\mathcal{Q})$. This is always  true since $P^{e_\Pi}$ can generate any sub-sequence corresponding to an arbitrary number $\lambda$ of repetitions of symbol $a$, with time delay $k_{min}$ separating each occurrence. For a sufficiently high  $\lambda$, $f_\Pi$ can emit symbol $b$ during the same interval, whatever $K_{e_\Pi}$ and $K_{f_\Pi}$. 

Thus, either $t_b=t_{a'}$, which is impossible for $\Pi'$ since arrival times of symbols from  a single transmitter must be strictly increasing (by definition of a renewal process as given in Section II-A), or it contradicts the previous assertion that states that $t_b - t_a \geq k_{min}$.

 \textit{Second case.} We claim that there exists a sub-alphabet $A_{e_{\Pi}} \in \Pi$, such that $|A_{e_{\Pi}}| \geq 2$ and $A_{e_{\Pi}} \notin \Pi'$.  $A_{e_{\Pi}}$ contains at least two symbols $a$ and $b$, such that $a$ is in a sub-alphabet $A_{e_{\Pi'}} \in \Pi'$ and $b \notin A_{e_{\Pi'}}$. Thus, there exists $A_{f_{\Pi'}} \in \Pi'$, with $f_{\Pi'} \in E(\Pi')$ and $e_{\Pi'} \neq f_{\Pi'}$, such   that, $b \in A_{f_{\Pi'}}$ with $A_{f_{\Pi'}} \neq A_{e_{\Pi'}}$. Using the same reasoning as in the first case, we arrive at a contradiction, which concludes the proof.

\end{proof}

\section{Proof of Proposition 1}

\begin{proof}
This proof is very similar to the proof of Proposition 4.1 in \cite{barbu2009semi}. First we have
\begin{equation}
\label{fikegal1}
\forall\ i\ \in A_e\ \sum_{k \in K^i} q^{e,\sim}_i(k) = 1
\end{equation}
Let $(\lambda_{i})_{i \in z[A_e]}$ be real coefficients. According to (3) and \eqref{fikegal1}, the approached log-likelihood can be written in the form 

\begin{equation}
\label{eq:new_log_approach_likelihood}
\text{log}(\mathcal{L}^T_{\sim}(z^e,t^e)) \nonumber
= \sum_{i,j \in A_e} N^e_{i,j}(T) \text{log} \ p^{e,\sim}_{i,j}  + \sum_{i \in A_e}  \left( \sum_{k \in K^i} N^e_i(k,T) \ \text{log} \ q^{e,\sim}_i(k) + \lambda_i (1 - \sum_{l \in K^i} q^{e,\sim}_i(l)) \right)
\end{equation}

When deriving  \eqref{eq:new_log_approach_likelihood} with respect to $q^{e,\sim}_i(k)$, we observe that a maximum is obtained in $\hat{q}^{e,\sim}_i(k)  = \frac{N^{e}_{i}(k,T)}{\lambda_i}$.

By using (\ref{fikegal1}), we obtain
\begin{align}
1 = \sum_{k \in K^i} q_i^{e,\sim}(k) = \sum_{k \in K_i} \frac{N^{e}_{i}(k,T)}{\lambda_{i}}  = \frac{N^{e}_{i}(T)}{\lambda_{i}}
\end{align}

Thus, the value of $q^{e,\sim}_i(k)$ maximizing  \eqref{eq:new_log_approach_likelihood} is $\hat{q}^{e,\sim}_i(k)   = \frac{N^{e}_{i}(k,T)}{N^{e}_{i}(T)}$.

With the same method, we derive that the value of $p^{e,\sim}_{i,j}$ maximizing  \eqref{eq:new_log_approach_likelihood} is  $\hat{p}^{e,\sim}_{i,j}(T) = \frac{N^e_{i,j}(T)}{N^e_i(T)}$.

\end{proof}

\section{Proof of Theorem 2}

In order to prove Theorem 2, we first show that the generative model $G_{\Pi}$ defined as a  set of Markov renewal chains can be represented as an ergodic finite-state-machine (FSM) source. 

Using the notation proposed in \cite{seroussi2012deinterleaving},  an  FSM  over an alphabet $A$ is defined by the triplet $\mathcal{F} = (\mathcal{S},s_0,f)$ where $\mathcal{S}$ is a finite set of states, $s_0 \in \mathcal{S}$ is a fixed initial state and $f : \mathcal{S} * A \rightarrow{} \mathcal{S}$ is the \textit{next-state function}. The definition of an FSM source is completed from an FSM with the addition of a conditional probability distribution $P_{\mathcal{F}}(.|s)$ with each state $s$ of $\mathcal{S}$, and a probability distribution $P_{\mathcal{F}}^{init}(.)$ on the initial state $s_0$. The FSM source generates a sequence of states $s$ by choosing an initial state $s_0$ according to $P_F^{init}(.)$, then by choosing an action $a_t$ according to $P_F(.|s_{t-1})$, and transitions to the state $s_t=f(s_{t-1},a_t)$.

\subsection{Markov renewal chain as an FSM}

We first observe that a Markov renewal chain $(Z^e,T^e)$ can be represented as an FSM source. For a transmitter $e$ generating a sequence of symbols in $A_e$, we define the state space $\mathcal{S}^e \in A_e \times D^e$, with $D^e$ the set of integers corresponding to the time since a letter of $A_e$ has been transmitted.  $D^e = \{0, 1, \dots, \underset{a\in A_e, k \in K^a }{max} k-1\}$  is a finite set of integers.  

A state $s \in \mathcal{S}^e$ corresponds to a pair  $s =(i,d)$, with $i \in A_e$, the last symbol transmitted by  the emittter $e$ since a duration of $d \in \mathbb{N}$  units of time.

We introduce also the set $A^*_e = A_e \cup *$, with $*$ a new symbol, indicating that the previous symbol $i \in A_e$ lasts one unit of time longer but that there is no transition toward a new letter.

For a Markov renewal chain $(Z^e,T^e)$, we consider the FSM $\mathcal{F}^e = (\mathcal{S}^e, s^e_0, f^e)$, with finite state space $\mathcal{S}^e \in A_e \times D^e$, initial state $s_0$ and next-state function $f^e$.

For the definition of the next-state function $f^e$, given a state $s = (i,d)$ and $a \in A^*_e$, we have 

\begin{equation*}
f^e(s, a) = \begin{cases}
\text{$(i,d +1)$} & \text{if $a = *$}\\
\text{$(a,0)$} & \text{if $a \in A_e$} 
\end{cases}
\end{equation*}

Note that, following a  process different from $P^e$ on states $S^e$ (which we consider in the following for the proof of Theorem 2), it is possible to get actions at state $s$ that should be unavailable from that state. This is the case for instances when $a=*$ while $d\geq k_{max}$, with $k_{max}>\max\limits_{b\in A_e, k \in K^b } k$ an arbitrarily delay limit set for the transmitters (allowing to consider finite state sets in the following). 
In that cases, we consider that $f^e(s,a)=s$ to stay inside $S^e$.  

To complete the definition of an FSM source, for each state $s = (i,d)$ (with $i \in A_e$ and $d \in \mathbb{N}$) and symbol $a \in A^*_e$ we define the conditional distribution as

\begin{equation*}
P_{\mathcal{F}^e}(a | (i,d)) = \begin{cases}
\text{$1 - h_i(d+1)$} & \text{if $a = *$}\\
\text{$p^e_{i,j}h_i(d+1)$} & \text{if $a = j \in A_e$},
\end{cases}
\end{equation*}

\noindent with $h_i(t) = \frac{q^e_i(t)}{\sum\limits_{k \in K^i, k \geq t} q^{e}_i(k)}$, the instantaneous risk of occurrence of a transition at time $t$ for the letter $i$. 

By abuse of notation and for simplicity,  we denote  $P_{\mathcal{F}^e}(s_{t+1}| s_t) = P_{\mathcal{F}^e}(a_{t+1}| s_t)$ the probability of choosing an action $a_{t+1}$ given $s_t$, which leads to the deterministic transition to the state $s_{t+1} = f(s_{t},a_t)$.

\begin{proposition} 
\label{prop_ergodic}
Under assumptions $(\mathcal{Q})$, $(\mathcal{K})$ and $(\mathcal{P})$, each FSM $\mathcal{F}^e = (\mathcal{S}^e, s^e_0, f^e)$ for $e = 1, \dots, E$, is ergodic.
\end{proposition}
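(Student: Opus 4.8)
The plan is to view the FSM source $\mathcal{F}^e$ as a time-homogeneous Markov chain on the finite state set $\mathcal{S}^e = A_e \times D^e$, whose one-step transition probabilities are the $P_{\mathcal{F}^e}(s_{t+1} \mid s_t)$ defined above, and to prove that this chain is \emph{irreducible} and \emph{aperiodic} on its recurrent class, which together give ergodicity for a finite chain. First I would pin down which states are recurrent: starting from any $(j,0)$ the coordinate $d$ can only increase until a transition occurs, and since $h_j(\max K^j) = q^e_j(\max K^j)/q^e_j(\max K^j) = 1$, a transition is forced no later than $d+1 = \max K^j$. Hence the only states visited infinitely often form the set $\mathcal{R}^e = \{(i,d) : i \in A_e,\ 0 \le d < \max K^i\}$, while the remaining states of $\mathcal{S}^e$ (those with $d \ge \max K^i$) are transient, never reached under $P^e$; I would restrict the ergodicity analysis to $\mathcal{R}^e$.

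Before proving communication I would record two elementary positive-probability moves. By assumption $(\mathcal{Q})$ every $q^e_i(k)$ with $k \in K^i$ is strictly positive, so for $t < \max K^i$ the denominator $\sum_{k \ge t} q^e_i(k)$ strictly exceeds the numerator $q^e_i(t)$, giving $h_i(t) < 1$; consequently the increment $(i,d) \to (i,d+1)$ has positive probability $1 - h_i(d+1)$ whenever $d+1 < \max K^i$. Symmetrically, whenever $d+1 \in K^i$ we have $h_i(d+1) > 0$, and assumption $(\mathcal{P})$ gives $p^e_{i,j} > 0$, so the transition $(i,d) \to (j,0)$ has positive probability $p^e_{i,j}\,h_i(d+1)$ for every $j \in A_e$; at $d+1 = \max K^i$ this probability is exactly $p^e_{i,j}$. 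Irreducibility of $\mathcal{R}^e$ then follows by chaining these moves: from an arbitrary $(i,d)$ I increment up to $(i,\max K^i - 1)$ and jump to $(j,0)$ for any desired $j$, and from $(j,0)$ I increment up to any $(j,d')$ with $d' < \max K^j$, so every pair of states in $\mathcal{R}^e$ communicates.

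The main obstacle is aperiodicity, and this is exactly where assumption $(\mathcal{K})$ enters. I would fix the reference state $(i,0)$ and analyse its set of return times $R = \{ n \ge 1 : (P_{\mathcal{F}^e})^n\big((i,0),(i,0)\big) > 0 \}$, whose greatest common divisor is the period. A self-transition $i \to i$ realised with sojourn exactly $k$ (increment through $(i,1),\dots,(i,k-1)$ and then transition, positive by the moves above together with $p^e_{i,i}>0$) shows $K^i \subseteq R$; a detour $i \to j \to i$ with sojourns $k \in K^i$ and $k' \in K^j$ shows $k + k' \in R$ for every $j \in A_e$. Letting $g = \gcd R$, from $K^i \subseteq R$ we get $g \mid k$ for all $k \in K^i$, and then $g \mid (k+k')$ forces $g \mid k'$ for all $k' \in K^j$ and all $j$; hence $g$ divides every element of $K_e = \bigcup_{a \in A_e} K^a$, so $g \mid \gcd(K_e) = 1$ by $(\mathcal{K})$, i.e. $g = 1$.

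Finally I would invoke the standard fact that a finite Markov chain which is irreducible and aperiodic on its unique recurrent class is ergodic (it possesses a unique stationary distribution with full support on $\mathcal{R}^e$ to which $(P_{\mathcal{F}^e})^n$ converges), and note that the transient states are absorbed into $\mathcal{R}^e$ in finitely many steps. The delicate points to get right are that $(\mathcal{Q})$ is genuinely needed to keep every hazard $h_i(t)$ strictly below $1$ for $t < \max K^i$ (so that all intermediate increments are feasible) and exactly $1$ at $t = \max K^i$ (so the state set stays finite), and that the gcd reduction correctly transfers divisibility from the per-symbol sets $K^i$ to the full union $K_e$; without $(\mathcal{K})$ --- for instance if all sojourn times were even --- the chain would be periodic, which shows the hypothesis cannot be dropped.
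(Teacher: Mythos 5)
Your proof is correct and follows essentially the same route as the paper's: positive-probability increment and jump moves (from $(\mathcal{P})$ and $(\mathcal{Q})$) give communication, and a return-time gcd argument driven by assumption $(\mathcal{K})$ gives aperiodicity, hence ergodicity of the finite chain. Your bookkeeping is in fact slightly tighter than the paper's --- you restrict attention to the genuinely reachable states $\mathcal{R}^e$ (the paper asserts recurrence of all of $\mathcal{S}^e$) and you run the divisibility argument at the single reference state $(i,0)$ via self-loops and two-letter detours rather than at an arbitrary $(i,d)$, but these are refinements of the same argument, not a different one.
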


\begin{proof}
Let $K_e = \cup_{z \in A_e} K^z $, the set of all sojourn times for transmitter $e$. Let $K^z(d)$ be the set of sojourn times for letter $z$ greater or equal to $d$: $K^z(d) = \{k \in K^z, k \geq d \}$.

First, we observe that all the states $s \in S^e$ of the FSM $\mathcal{F}^e = (\mathcal{S}^e, s^e_0, f^e)$ built from the Markov renewal chain $(Z^e,T^e)$  are recurrent.  Indeed, according to assumptions $(\mathcal{Q})$ and $(\mathcal{P})$, for $i,j \in A^e$, $l \in \mathbb{N}$  we have $P^e(z^e_{l+1} = j | z^e_{l} = i)  > 0$, and for $i \in A^e$, $k\in K^i$, $P^e(x_{l+1} = k | z^e_{l} = i) > 0$. Therefore,  for each state $s = (i,d) \in \mathcal{S}^e$, there is always a positive probability to return to this same state  $(i,d)$ in a finite number of time steps. 

Given a state $(i,d)$, the number of time steps allowing to return to this same state $(i,d)$  with positive probability, is $\Delta T = k^{i}_d + \sum_{j = 1, \dots, |K_e|} \lambda_j k_j^e$, with $k^{i}_d \in K^{i}(d)$, $k_j^e \in K_e$ and the $\lambda_j$'s are non-negative integer coefficients. Given a state $(i,d) \in \mathcal{S}^e$, we denote $T^e(i,d)$ this set of transition  times $\Delta T$.

Given a state $(i,d)$, let us show that the greatest common divisor (gcd) of the set $T^e(i,d)$ is equal to 1.

Let us assume that $gcd(T^e(i,d)) =r $, with $r> 1$ and let us show that we arrive at a contradiction. As $gcd(T^e(i,d)) = r$, 
\begin{equation}
\label{eq:divisor}
\forall (\lambda_1, \dots, \lambda_{K_e}) \in \mathbb{N}^{|K_e|}, \exists \mu \in  \mathbb{N}, k^{i}_d + \sum_{j = 1, \dots, |K_e|} \lambda_j k_j^e = \mu r.
\end{equation}

In particular, when all $\lambda_j$'s are equal to 0,  for $k^{i}_d \in K^{i}(d)$, there exists $\mu_0 \in \mathbb{N}$, such that $k^{i}_d = \mu_0 r$. Thus $r$ is a divisor of $k^{i}_d \in K_e$. 
 If $K_e$ contains only $k^{i}_d$, then according to assumption $(\mathcal{K})$, $k^{i}_d=1$, and we directly have $r = 1$ which is a contradiction. Otherwise $|K_e| > 1$. Thus, given any other sojourn time $k_j^e \in K_e $, such that $k_j^e \neq k^{i}_d$, using Equation \eqref{eq:divisor}, we know that there exists $\mu_j > \mu_0$, such that we have also $k^{i}_d + k_j^e = \mu_j r$, and thus we have $k_j^e = (\mu_j - \mu_0) r$. Then $r$ is a divisor of $k_j^e$. 
Thus, $r> 1$ is a divisor of all sojourn times $k_j^e \in K_e$, which is impossible because $gcd(K_e) = 1$, according to assumption $(\mathcal{K})$. 

Therefore, given a state $(i,d) \in \mathcal{S}^e$, we have shown that $gcd(T^e(i,d)) = 1$, thus each state $s \in \mathcal{S}^e$ is  aperiodic. 

Moreover  each pair of states, $s_i, s_j \in \mathcal{S}^e$, communicate with each other, i.e., when starting from state $s_i$ there is always a positive probability to arrive in state $s_j$ in a finite number of time steps. Therefore, $\mathcal{F}^e$ is ergodic.
\end{proof}

We now show the likelihood equivalence of both representations (Markov renewal chain and FSM source) for a single transmitter. Let $(z^e,t^e) = (z^e_l, t^e_l)_{l = 0, \dots, n^e-1}$ be an observed sequence of $n^e$ symbols  $z_l^e \in A_e$ with their corresponding time of arrival $t^e_l$, generated by $P^e$ until time $T$, given $(z_{-1}, t_{-1})$ the last event before the observation. Let $s^{e} = (s^e_t)_{t=0, \dots, T}$ be the observed sequence of $T+1$ pairs $s^e_t \in \mathcal{S}^e$,
such that for all $l = -1, 0, \dots, n^e-1, \forall t \in \llbracket t^e_l, t^e_{l+1} \llbracket,  s^e_t = (z^e_l, t - t^e_l)$. If $(z_{-1}^e, t_{-1}^e)$ is the last state before the observation, the corresponding last unobserved FSM state is $s^e_{-1} = (z^e_{-1},|t^e_{-1}|-1)$.

\begin{proposition} \label{prop:likelihood_fsm}

Given state $s^e_{-1}$ at time $t=-1$, the conditional likelihood $L^T_{-,\mathcal{F}^e}(s^{e})$ for the sequence $s^{e}=(s^e_i)_{i=0}^T$, generated by the FSM  $\mathcal{F}^e$,  is equal to the conditional  likelihood $\mathcal{L}_{-}^T(z^e,t^e)$ of the corresponding sequence $(z^e_i,t^e_i)_{i=0}^{N^e(T)-1}$ given ($z^e_{-1}, t^e_{-1})$. 

\end{proposition}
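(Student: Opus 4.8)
The plan is to isolate a single algebraic identity relating the FSM hazard factors to the renewal pmf, and then to show that the FSM likelihood factorizes over the maximal runs of the action $*$ (persistence) punctuated by genuine symbol transitions, with each such block reproducing exactly one factor of $\mathcal{L}_-^T(z^e,t^e)$ in \eqref{exact_likelihood}.

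First I would record the survival-function form of the hazard. Writing $R_i(\cdot)$ for the survival function of the sojourn time of letter $i$, and noting that $\sum_{k \in K^i,\, k \ge t} q^e_i(k) = R_i(t-1)$ while $R_i(0)=1$ (sojourn times being strictly positive), the definition $h_i(t) = q^e_i(t)/R_i(t-1)$ gives $1 - h_i(t) = R_i(t)/R_i(t-1)$. Hence for any $1 \le a \le b$ the persistence probabilities telescope, yielding the identity that drives the whole proof:
$$\prod_{d=a}^{b}\bigl(1 - h_i(d)\bigr) = \frac{R_i(b)}{R_i(a-1)}, \qquad\text{so that}\qquad \Bigl(\prod_{d=a}^{k-1}\bigl(1-h_i(d)\bigr)\Bigr)\, h_i(k) = \frac{q^e_i(k)}{R_i(a-1)}.$$

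Next I would decompose the state sequence $s^e=(s^e_t)_{t=0}^T$. Between two consecutive observed events $z^e_{l-1}$ (at $t^e_{l-1}$) and $z^e_l$ (at $t^e_l$), the FSM takes action $*$ from the states $(z^e_{l-1},d)$ with $d$ running from $0$ to $x^e_l-2$, and then the transition action $z^e_l$ from state $(z^e_{l-1},x^e_l-1)$. Applying the identity with $a=1$, $k=x^e_l$ (and using $R_i(0)=1$), this block contributes $p^e_{z^e_{l-1},z^e_l}\,q^e_{z^e_{l-1}}(x^e_l) = P^e(z^e_l\mid z^e_{l-1})\,P^e(t^e_l-t^e_{l-1}\mid z^e_{l-1})$, exactly the interior factor of \eqref{exact_likelihood}. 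I would then treat the two boundary blocks separately, which I expect to be the delicate part. For the initial block the chain starts at the unobserved state $s^e_{-1}=(z^e_{-1},|t^e_{-1}|-1)$ and persists through times $0,\dots,t^e_0-1$ before transitioning to $z^e_0$ at time $t^e_0$; here the persistence product begins at $d=|t^e_{-1}|$ rather than $d=1$, so the identity with $a=|t^e_{-1}|$ and $k=t^e_0-t^e_{-1}$ gives $p^e_{z^e_{-1},z^e_0}\,q^e_{z^e_{-1}}(t^e_0-t^e_{-1})/R^e_{-1}(|t^e_{-1}|-1)$, reproducing the first conditional factor of $\mathcal{L}_-^T$ together with its normalizing denominator. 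For the final block, after the last event $z^e_{N^e(T)-1}$ the FSM only persists up to $T$ with no further transition, so with $a=1$, $b=u^e_T$ the product telescopes to $R^e_{N^e(T)-1}(u^e_T)$, the closing factor of $\mathcal{L}_-^T$.

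Finally I would multiply the contributions of all blocks. Since the action $*$ and the transition actions exhaust every time step of $s^e$, and the three kinds of block match termwise with the corresponding factors in \eqref{exact_likelihood}, the product collapses precisely to $\mathcal{L}_-^T(z^e,t^e)$, giving $L^T_{-,\mathcal{F}^e}(s^e)=\mathcal{L}_-^T(z^e,t^e)$. The main obstacle is the index bookkeeping at the initial boundary: translating the offset $|t^e_{-1}|-1$ of the unobserved starting state into the correct lower limit of the telescoping product so that exactly the denominator $R^e_{-1}(|t^e_{-1}|-1)$ survives, while checking that the $d\mapsto d+1$ shift in the definition of $P_{\mathcal{F}^e}(\cdot\mid(i,d))$ is applied consistently across interior and boundary blocks.
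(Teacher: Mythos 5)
Your proof is correct and follows essentially the same route as the paper: both decompose the FSM likelihood into the initial block, the interior inter-event blocks, and the final persistence block, and reduce each to the corresponding factor of $\mathcal{L}_-^T(z^e,t^e)$ via the hazard/survival relation $1-h_i(t)=R_i(t)/R_i(t-1)$. Your only departure is organizational — stating the telescoping identity once as a standalone lemma and applying it uniformly (which also absorbs the $t^e_0=0$ edge case the paper handles separately) — rather than expanding the products of hazards inline for each block as the paper does.
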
 

\begin{proof}

We have

\begin{align} \label{eq:likelihood}
L^T_{-,\mathcal{F}^e}(s^{e})
&= \prod_{t =  -1}^{T-1} P_{\mathcal{F}^e}(s^e_{t+1}|s^e_{t}) \\
&= \prod_{t =  -1}^{t^e_0 - 1} P_{\mathcal{F}^e}(s^e_{t+1}|s^e_{t}) \prod_{l = 0}^{n^e-2} \prod_{t =  t^e_l}^{t^e_{l+1} - 1} P_{\mathcal{F}^e}(s^e_{t+1}|s^e_{t}) \prod_{t =   t^e_{n^e}-1}^{T - 1} P_{\mathcal{F}^e}(s^e_{t+1}|s^e_{t}).
\end{align}

We compute the first term  
$\prod_{t =  -1}^{t^e_0 - 1} P_{\mathcal{F}^e}(s^e_{t+1}|s^e_{t})$.
We have, if $t_0 > 0$

\begin{align}
\label{eq:init_fsm}
\prod_{t =  -1}^{t^e_0 - 1} P_{\mathcal{F}^e}(s^e_{t+1}|s^e_{t}) &= \prod_{t=-1}^{t^e_0-2}P_{\mathcal{F}^e}(s^e_{t+1}|s^e_{t}) P_{\mathcal{F}^e}(s^e_{t_0}|s^e_{t_0-1}) \\ \nonumber
& = \prod_{t=-1}^{t^e_0-2} (1- h_{z^e_{-1}}(|t^e_{-1}| + t + 1)) P^e(z^e_0|z^e_{-1})h_{z^e_{-1}}(t^e_0 + |t^e_{-1}|) \\ \nonumber
&= \frac{\sum_{k \geqslant |t_{-1}| + t_0} q_{z^e_{-1}}(k)}{\sum_{k \geqslant |t_{-1}|} q_{z^e_{-1}}(k)} P^e(z^e_0|z^e_{-1}) \frac{q_{z^e_{-1}}(t^e_0 + |t^e_{-1}|)}{\sum_{k \geqslant |t^e_{-1}| + t_0} q_{z^e_{-1}}(k)} \\ \nonumber
&=  P^e(z^e_0|z^e_{-1}) \frac{P(t^e_0 + |t^e_{-1}||z^e_{-1})}{R^e_{-1}(|t^e_{-1}|-1)}
\end{align}

If $t_0=0$, \eqref{eq:init_fsm} is immediate.

Now let us compute separately the terms $\prod_{t =  t^e_l}^{t^e_{l+1} - 1} P_{\mathcal{F}^e}(s^e_{t+1}|s^e_{t})$ and $\prod_{t =   t^e_{n^e-1}}^{T - 1} P_{\mathcal{F}^e}(s^e_{t+1}|s^e_{t})$.

For $l = 0, \dots, n^e-2$, given  states $s^e_{t^e_l} = (z^e_l = i, d=0)$ and $s^e_{t^e_{l+1}} = (z^e_{l+1} = j, d=0)$,
\begin{align}\label{eq:first_term}
\prod_{t =  t^e_l}^{t^e_{l+1} - 1} P_{\mathcal{F}^e}(s^e_{t+1}|s^e_{t}) 
&=   [\prod_{t = 0 }^{t^e_{l+1} - t^e_{l} - 2}  1 - h_i(t+1)] p^e_{i,j}h_i(t^e_{l+1} - t^e_{l}) \\ \nonumber
&=   R_i(t^e_{l+1} - t^e_{l} - 1) p^e_{i,j}h_i(t^e_{l+1} - t^e_{l})
\end{align}

Thus,
\begin{align}\label{eq:first_part}
\prod_{t =  t^e_l}^{t^e_{l+1} - 1} P_{\mathcal{F}^e}(s^e_{t+1}|s^e_{t}) &= p^e_{i,j} \sum_{k \in K^i, k > t^e_{l+1} - t^e_{l} - 1} q^e_i(k) \times \frac{q^e_{i}(t^e_{l+1} - t^e_{l})}{\sum_{k^ \in K^i, k \geq t^e_{l+1} - t^e_{l}} q^e_i(k)} \\ \nonumber
&= p^e_{i,j} q^e_i(t^e_{l+1} - t^e_{l}) \\ \nonumber
& = P^e(z^e_{l+1}  | z^e_{l}) \times 
P^e(t^e_{l+1} - t^e_{l}|z^e_l)
\end{align}

Furthermore, given a state $s^e_{t^e_{n^e-1}} = (z^e_{n^e-1}, d=0)$,

\begin{align}\label{eq:second_part}
\prod_{t =   t^e_{n^e-1}}^{T - 1} P_{\mathcal{F}^e}(s^e_{t+1}|s^e_{t}) 
&= \prod_{t = 0 }^{T - t^e_{n^e-1} - 1} (1 - h_{z^e_{n^e-1}}(t+1)) \\ \nonumber
&= R^e_{n^e-1}(T - t^e_{n^e-1})
\end{align}

Using (2), \eqref{eq:likelihood}, \eqref{eq:init_fsm}, \eqref{eq:first_part} and \eqref{eq:second_part} we finally obtain:

\begin{align}
L^T_{-,\mathcal{F}^e}(s^e)  
=  \mathcal{L}_{-}^T(z^e,t^e).
\end{align}

\end{proof}

\subsection{Collection of Markov renewal chains  as a FSM}

Given a partition $\Pi = \{A_{e_\Pi}\}_{e_\Pi \in E(\Pi)}$ of $\mathcal{A}$ into  $m$  
non-empty and disjoint sub-alphabets, the global generative process $G_{\Pi}$  corresponds to a set of  $m$ independent Markov renewal chains associated to each individual transmitter $e_\Pi \in E(\Pi)$, each emitting from its own sub-alphabet $A_{e}$. 
This process can be represented as a global FSM $\mathcal{F}_{\Pi} = (\mathcal{S}_\Pi, \mathcal{\textbf{s}}_{0}^\Pi, f_\Pi)$, with state set $\mathcal{S}_\Pi = \prod_{e \in E(\Pi)} 
\mathcal{S}^{e}$ 
corresponding to the Cartesian product of the states of the individual FSM of  each transmitter $e \in E(\Pi)$  defined in previous section. 
Given a state $\mathcal{\textbf{s}} \in \mathcal{S}_\Pi$, with $\mathcal{\textbf{s}} = \{s^{e}\}_{e \in E(\Pi)}$,  
and an action $\mathbf{a} \in A^*_\Pi$, with $A^*_\Pi=\prod_{e \in E(\Pi)} A^*_{e}$ and
 $\mathbf{a} =(a^{e})_{e \in E(\Pi)}$, we have $f_\Pi(\mathcal{\textbf{s}}, \mathbf{a}) = (f^{e}(s^{e}, a^{e}))_{e \in E(\Pi)}$, where $f^{e}$ stands for the next state function from the corresponding  individual FSM ${\cal F}^{e}$. 
For each state we define and $P_{\mathcal{F}_\Pi}(. | \textbf{s}) = \prod_{e \in E(\Pi)} P_{\mathcal{F}^{e}}(. | s^{e}) $.

\begin{proposition}
 \label{prop_ergodic_global}
\textit{Under assumptions $(\mathcal{P})$, $(\mathcal{Q})$ and $(\mathcal{K})$, the FSM $\mathcal{F}_{\Pi} = (\mathcal{S}, \mathcal{\textbf{s}}_0, f)$ is ergodic.}
\end{proposition}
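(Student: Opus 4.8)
The plan is to leverage Proposition \ref{prop_ergodic}, which already establishes that each individual FSM $\mathcal{F}^e$ is ergodic, i.e. that its induced Markov chain on the finite state set $\mathcal{S}^e$ is both irreducible and aperiodic. The global FSM $\mathcal{F}_\Pi$ is, by construction, the direct product of these chains: since the transmitters are independent, the one-step transition probability factorizes as $P_{\mathcal{F}_\Pi}(\mathbf{s}' \mid \mathbf{s}) = \prod_{e \in E(\Pi)} P_{\mathcal{F}^e}(s^{e\prime} \mid s^e)$, and consequently so does every $n$-step transition probability:
\begin{equation}
P_{\mathcal{F}_\Pi}^{(n)}(\mathbf{s}' \mid \mathbf{s}) = \prod_{e \in E(\Pi)} P_{\mathcal{F}^e}^{(n)}(s^{e\prime} \mid s^e).
\end{equation}

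The central point is that irreducibility of a product of chains requires strictly more than irreducibility of each factor: the factors must also be aperiodic, otherwise periodic components could render the product reducible (two chains of period $2$, say, may have a product whose state graph decomposes). I would therefore invoke the standard characterization that a finite irreducible aperiodic Markov chain is primitive (regular): for each $e$ there exists an integer $N_e$ such that $P_{\mathcal{F}^e}^{(n)}(s^{e\prime} \mid s^e) > 0$ for every pair of states $s^e, s^{e\prime} \in \mathcal{S}^e$ and every $n \geq N_e$. This is exactly where the aperiodicity established in Proposition \ref{prop_ergodic} under assumption $(\mathcal{K})$, through the $\gcd$ argument, becomes indispensable.

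Then, setting $N = \max_{e \in E(\Pi)} N_e$, for any $n \geq N$ and any pair of global states $\mathbf{s}, \mathbf{s}' \in \mathcal{S}_\Pi$ every factor in the product above is strictly positive, so that $P_{\mathcal{F}_\Pi}^{(n)}(\mathbf{s}' \mid \mathbf{s}) > 0$. This single statement yields both required properties at once: positivity for some finite $n$ shows that any state communicates with any other, hence $\mathcal{F}_\Pi$ is irreducible; and positivity for all $n \geq N$ shows that the set of return times to any fixed state contains all sufficiently large integers, whose greatest common divisor is $1$, hence every state is aperiodic. The chain is therefore irreducible and aperiodic on the finite set $\mathcal{S}_\Pi$, which is precisely the ergodicity claimed for $\mathcal{F}_\Pi$.

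The main obstacle I anticipate is making the transfer from the factors to the product fully rigorous. A naive argument using irreducibility alone fails, so I would be careful to state explicitly that the aperiodicity of each $\mathcal{F}^e$ is essential, and to justify the existence of a common exponent $N$ by appealing to the primitivity of finite ergodic chains rather than by attempting to construct a shared return time by hand, which would amount to re-deriving that characterization.
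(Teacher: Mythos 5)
Your proof is correct and follows essentially the same route as the paper: both arguments invoke the ergodicity of each component FSM $\mathcal{F}^e$ from Proposition \ref{prop_ergodic}, pass to primitivity (existence of a threshold beyond which paths, equivalently positive $n$-step transition probabilities, exist between every pair of states for every length), take the maximum of these thresholds over the emitters, and conclude that the product chain admits connecting paths of a common length, hence is ergodic. Your write-up is in fact slightly more explicit than the paper's, since you spell out the factorization of the $n$-step probabilities and why aperiodicity of the factors (not mere irreducibility) is the indispensable ingredient, but the underlying idea is identical.
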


\begin{proof}

Under assumptions $(\mathcal{P})$, $(\mathcal{Q})$ and $(\mathcal{K})$, according to Proposition \ref{prop_ergodic} each FSM  $\mathcal{F}^e = (\mathcal{S}^e, s^e_0, f^e)$ for  $e \in E(\Pi)$, 
is ergodic. 

Therefore, for  $e \in E(\Pi)$, there exists some $n^e_0$, such that for every integer $n^e$, greater than $n^e_0$, and for every pair of states $i, j \in \mathcal{S}^e$, there exists a $i$-to-$j$ path of length $n^e_0$, a $i$-to-$j$ path of length $n^e_0 + 1$, a $i$-to-$j$ path of length $n^e_0 + 2$, and so on. We denote $n_0 = \underset{  e \in E(\Pi)}{max} n^e_0$. Thus, given any two states   $\mathcal{\textbf{s}}_i = \{s_i^e\}_{e \in E(\Pi)}$ 
and $\mathcal{\textbf{s}}_j = \{s_i^e\}_{e \in E(\Pi)}$, 
there always exists a path of length $n_0$ allowing to reach  $\mathcal{\textbf{s}}_j$ from $\mathcal{\textbf{s}}_i$. 
Therefore $\mathcal{F}_{\Pi}$ is ergodic. 
\end{proof}

Given a partition $\Pi = \{A_{e}\}_{e \in E(\Pi)}$ of $\mathcal{A}$ into $m$ non-empty and disjoint sub-alphabets, let $(z,t)$ be sequence of $n$ symbols  $z \in \mathcal{A}$ with their corresponding time of arrival $t_l$, generated by $P = G_{\Pi}$ and observed in the time window $ \llbracket 0,T \rrbracket$. In this sequence, each transmitter $e \in E(\Pi)$ generates a sequence $(z^e_l, t^e_l)_{l = 0, \dots, n^e-1}$ of $n^e$ symbols  $z_l^e \in A_e$ with their corresponding time of arrival $t^e_l \in \mathbb{N}$ and their last non observed state $(z^e_{-1},t^e_{-1})$.
Let $\mathbf{s} = (\mathbf{s}_t)_{t=0, \dots, T}$ be the sequence of $T+1$ states   $\mathbf{s}_t=\{s^{e}_t\}_{e \in E(\Pi)} 
\in \mathcal{S}_\Pi$, such that $\forall e \in E(\Pi),  
\ \forall l = -1, 0, \dots, n^{e}-1, \forall t \in \llbracket t^{e}_l, t^{e}_{l+1} \llbracket , s^{e}_t = (z^{e}_l, t - t^{e}_l)$ given $s^e_{-1} = (z^e_{-1},|t^e_{-1}|-1)$.

\begin{proposition}
\label{prop:likelihood_global_fsm}
    Given a state $\mathbf{s_{-1}}$ at time $t=-1$, the conditional likelihood $L^T_{-,\mathcal{F}_\Pi}(\mathbf{s})$ of this sequence $\mathbf{s}$, generated by the FSM  $\mathcal{F}_\Pi$,  and observed until time $T$, 
    is equal to the conditional likelihood $\mathcal{L}^T_{-,\Pi}(z,t)$ of the sequence $(z,t)$ generated by $P$ given $(z^e_{-1},t^e_{-1})_{e \in E(\Pi)}$. 
\end{proposition}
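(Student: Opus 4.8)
The plan is to reduce the global statement to the single-transmitter equivalence already established in Proposition \ref{prop:likelihood_fsm}, exploiting the fact that both the FSM source $\mathcal{F}_\Pi$ and the generative process $P = G_\Pi$ factorize as independent products over the transmitters $e \in E(\Pi)$. First I would write the FSM likelihood of the joint state sequence as a telescoping product of one-step transition probabilities,
\begin{equation*}
L^T_{-,\mathcal{F}_\Pi}(\mathbf{s}) = \prod_{t=-1}^{T-1} P_{\mathcal{F}_\Pi}(\mathbf{s}_{t+1} \mid \mathbf{s}_t),
\end{equation*}
and then invoke the defining factorization $P_{\mathcal{F}_\Pi}(\cdot \mid \mathbf{s}) = \prod_{e \in E(\Pi)} P_{\mathcal{F}^e}(\cdot \mid s^e)$. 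Since the joint next-state function acts coordinatewise, $f_\Pi(\mathbf{s},\mathbf{a}) = (f^e(s^e,a^e))_{e \in E(\Pi)}$, the transition $\mathbf{s}_t \to \mathbf{s}_{t+1}$ corresponds to a joint action whose $e$-th component drives exactly the individual transition $s^e_t \to s^e_{t+1}$; hence each factor $P_{\mathcal{F}_\Pi}(\mathbf{s}_{t+1}\mid \mathbf{s}_t)$ splits into $\prod_{e \in E(\Pi)} P_{\mathcal{F}^e}(s^e_{t+1}\mid s^e_t)$.

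Next I would swap the two finite products (over time $t$ and over transmitters $e$) — a purely formal rearrangement of scalar factors — to obtain $L^T_{-,\mathcal{F}_\Pi}(\mathbf{s}) = \prod_{e \in E(\Pi)} L^T_{-,\mathcal{F}^e}(s^e)$. Applying Proposition \ref{prop:likelihood_fsm} to each factor gives $\prod_{e \in E(\Pi)} \mathcal{L}_{-}^T(z^e,t^e)$. Finally I would recognize this last product as $\mathcal{L}^T_{-,\Pi}(z,t)$, using the independence of the transmitters in $G_\Pi$ (the same independence underlying the factorization \eqref{eq:independance}): the conditional likelihood of the full sequence given the boundary states $(z^e_{-1},t^e_{-1})_{e \in E(\Pi)}$ is, by construction of $P = G_\Pi$ as a set of independent Markov renewal chains, the product of the individual conditional likelihoods $\mathcal{L}_{-}^T(z^e,t^e)$.

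The main point to verify carefully — rather than a deep obstacle — is the bookkeeping of the index extraction. I must check that projecting the joint state sequence $\mathbf{s}$ onto its $e$-th coordinate yields precisely the state sequence $s^e = (s^e_t)_{t=0}^{T}$ associated with $(z^e,t^e)$ in the hypotheses of Proposition \ref{prop:likelihood_fsm}, including the correct unobserved initial state $s^e_{-1} = (z^e_{-1}, |t^e_{-1}|-1)$. This holds directly from the definition $s^e_t = (z^e_l, t - t^e_l)$ on each interval $\llbracket t^e_l, t^e_{l+1} \llbracket$, so once the coordinatewise structure of $\mathcal{F}_\Pi$ and the factorization of $P_{\mathcal{F}_\Pi}$ are spelled out, the argument is immediate and requires no further probabilistic input beyond the two already-proven single-transmitter identities.
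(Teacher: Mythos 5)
Your proposal is correct and follows exactly the route the paper takes: the paper's own proof consists of the single remark that the claim is immediate from the independence of the transmitters together with Proposition~\ref{prop:likelihood_fsm}, which is precisely the factorization-and-reduce argument you spell out (coordinatewise transitions, product form of $P_{\mathcal{F}_\Pi}(\cdot\mid\mathbf{s})$, swap of the finite products, then the single-transmitter identity applied factor by factor). You have merely made explicit the bookkeeping that the paper leaves implicit, including the projection of $\mathbf{s}$ onto the per-transmitter state sequences with the correct boundary states $s^e_{-1}=(z^e_{-1},|t^e_{-1}|-1)$.
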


\begin{proof}

This is trivial, as all the transmitters are independent, and using Proposition \ref{prop:likelihood_fsm}.
\end{proof}

Now let $\Pi$ be a partition  associated to 
a generative model $G_{\Pi}$.
The maximum of the exact and approached  likelihood, given by  (2) and (3), of the sequence $(z,t)$ following $(z_{-1},t_{-1})$ are respectively  denoted as $\widehat{\mathcal{L}}_{-, \Pi}^T(z,t)$ and $\widehat{\mathcal{L}}^T_{\sim,\Pi}(z,t)$.

\begin{proposition}
 \label{prop:approx_likelihood_max}
 Given two partitions $\Pi$ and $\Pi'$, for any sequence  $(z,t)$ sampled from $P_{\Pi|\Pi'}$, we have: 
\begin{equation}
\frac{\text{log}(\widehat{\mathcal{L}}_{-,\Pi'}^T(z,t)) - \text{log}(\widehat{\mathcal{L}}^T_{\sim,\Pi'}(z,t))}{T} \rightarrow 0 \ 
\end{equation}
\noindent a.a.s as $T \rightarrow \infty$.

\end{proposition}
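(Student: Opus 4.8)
The plan is to reduce the statement to a single transmitter, isolate the handful of boundary factors by which the exact conditional likelihood \eqref{exact_likelihood} differs from the approached one \eqref{eq:approach_likelihood}, and show that these contribute only $O(\log T)$ to each maximised log-likelihood, hence $o(T)$ after division. First I would invoke the independence of the transmitters together with Proposition \ref{prop:likelihood_global_fsm} to factor both maximised likelihoods over $e \in E(\Pi')$ (the maximisation separates because each transmitter carries its own parameters), writing
\begin{equation*}
\log\bigl(\widehat{\mathcal{L}}_{-,\Pi'}^T(z,t)\bigr) - \log\bigl(\widehat{\mathcal{L}}^T_{\sim,\Pi'}(z,t)\bigr) = \sum_{e \in E(\Pi')} \Bigl( \log\bigl(\widehat{\mathcal{L}}_{-}^T(z^e,t^e)\bigr) - \log\bigl(\widehat{\mathcal{L}}^T_{\sim}(z^e,t^e)\bigr) \Bigr).
\end{equation*}
Since $|E(\Pi')| \le |\mathcal{A}|$ is fixed, it suffices to prove that each summand, divided by $T$, tends to $0$.

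For a fixed transmitter and \emph{any} admissible parameter $\theta$, comparing \eqref{exact_likelihood} with \eqref{eq:approach_likelihood} gives the pointwise identity $\log\mathcal{L}_{-}^T(z^e,t^e) = \log\mathcal{L}^T_{\sim}(z^e,t^e) + B^T_e(\theta)$, where the boundary term
\begin{equation*}
B^T_e(\theta) = \log p^e_{z^e_{-1},z^e_0} + \log\frac{q^e_{z^e_{-1}}(t^e_0 - t^e_{-1})}{R^e_{-1}(|t^e_{-1}|-1)} + \log R^e_{N^e(T)-1}(u^e_T)
\end{equation*}
collects exactly three factors, a number independent of $T$. The key elementary observation is that $B^T_e(\theta)\le 0$ for every $\theta$: the first and third factors are probabilities, and since $t^e_0 - t^e_{-1} = t^e_0 + |t^e_{-1}| \ge |t^e_{-1}|$ the numerator of the middle factor is one of the summands defining $R^e_{-1}(|t^e_{-1}|-1)$, so that ratio is at most $1$. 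Evaluating this identity at the exact maximiser and bounding $\log\mathcal{L}^T_{\sim}$ by its own maximum yields the easy inequality $\log(\widehat{\mathcal{L}}_{-}^T)\le\log(\widehat{\mathcal{L}}^T_{\sim})$, i.e. each summand is $\le 0$.

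For the matching lower bound I would substitute the empirical estimators of Proposition \ref{prop:estimator} (the maximiser $\hat\theta_\sim$ of the approached likelihood) into the exact likelihood, giving each summand $\ge B^T_e(\hat\theta_\sim)$. At $\hat\theta_\sim$ every factor of $B^T_e$ is an empirical frequency, hence at most $1$ and, when strictly positive, at least $1/N^e(T)\ge 1/n$; moreover $-\log\hat R^e_{-1}(|t^e_{-1}|-1)\ge 0$ only helps. Thus, as soon as $\hat p^{e,\sim}_{z^e_{-1},z^e_0}$, $\hat q^{e,\sim}_{z^e_{-1}}(t^e_0 - t^e_{-1})$ and $\hat R^e_{N^e(T)-1}(u^e_T)$ are positive, one gets $B^T_e(\hat\theta_\sim)\ge -3\log n$. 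Positivity of the first two follows from the ergodicity of $\mathcal{F}_\Pi$ (Proposition \ref{prop_ergodic_global}) together with $(\mathcal{P})$ and $(\mathcal{Q})$: the boundary transition $z^e_{-1}\to z^e_0$ and the boundary sojourn $t^e_0 - t^e_{-1}$ are realised, positive-probability configurations, so they recur with positive stationary frequency and their empirical counts diverge on an event of probability tending to one. Since $n=\Theta(T)$ almost surely, $-3\log n / T\to 0$; combining with the upper bound and summing over the finitely many $e$ closes the argument.

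The hard part will be the final survival factor $\log\hat R^e_{N^e(T)-1}(u^e_T)$ in the lower bound: I must exclude $\hat R^e_{N^e(T)-1}(u^e_T)=0$, i.e. guarantee that the last censored gap $u^e_T$ does not exceed \emph{every} completed sojourn of $z^e_{N^e(T)-1}$ recorded inside the window. When $\Pi'$ merges true sub-alphabets the inter-arrival times of $z[A_e]$ stay bounded by the largest admissible sojourn time and positivity is immediate; but when $\Pi'$ splits a true sub-alphabet these inter-arrivals become unbounded return times, so I would control $u^e_T$ by a renewal/record argument: the forward-recurrence (censored) gap is, up to size-biasing, a single draw from the return-time law, whose tail is geometric for a finite ergodic chain, so the probability that it beats the maximum of the $\Theta(T)$ completed gaps vanishes as $T\to\infty$. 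The same recurrence reasoning secures the positivity of the initial sojourn factor, and hence of $\hat R^e_{-1}(|t^e_{-1}|-1)\ge \hat q^{e,\sim}_{z^e_{-1}}(t^e_0 - t^e_{-1})>0$.
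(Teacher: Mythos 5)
Your proof is correct, but it takes a genuinely different route from the paper's. The paper attacks the two maximizations themselves: it writes the exact conditional log-likelihood \eqref{exact_likelihood} including its boundary terms, derives Lagrangian/KKT stationarity conditions for the exact maximizers (obtaining, e.g., $\hat{p}^e_{z^e_{-1},z^e_0}(T)=\frac{N^e_{z^e_{-1},z^e_0}(T)+1}{N^e_{z^e_{-1}}(T)+1}$ and two-sided bounds $\frac{N^e_i(k,T)-1}{\lambda_i}\le \hat{q}^e_i(k,T)\le\frac{N^e_i(k,T)+2}{\lambda_i}$), and then uses ergodicity of the restricted process $P_{\Pi|\Pi'}$ (linear growth of counts) to show these exact maximizers converge to the empirical estimators of Proposition \ref{prop:estimator}, concluding that the normalized difference of maxima vanishes since the leftover boundary factors tend to constants. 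You never analyze the exact maximizers at all: you isolate the boundary correction $B^T_e(\theta)$, observe it is nonpositive for every admissible $\theta$ (which gives $\log\widehat{\mathcal{L}}^T_{-}\le\log\widehat{\mathcal{L}}^T_{\sim}$ for free), and obtain the matching lower bound by plugging the approached maximizer into the exact likelihood, squeezing the whole difference between $B^T_e(\hat\theta_\sim)\ge-3\log n$ (on an event of probability tending to one) and $0$. This sandwich is more elementary and arguably sharper: it gives an explicit $O(\log n/T)$ rate and sidesteps the paper's final, somewhat brisk step of passing from convergence of estimators to convergence of normalized maxima of products with a growing number of factors. Both arguments rest on the same probabilistic input---recurrence of the boundary configurations under the restricted ergodic process---which the paper needs for linear count growth and you need only to ensure the three empirical boundary factors are positive (hence at least $1/n$); your handling of the censored final gap $u^e_T$, correctly flagged as the delicate point, is sound (a completed gap of the same symbol exceeding $u^e_T$ appears in the window a.a.s.). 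One citation-level caveat: the ergodicity you invoke should not be that of $\mathcal{F}_\Pi$ from Proposition \ref{prop_ergodic_global}, because the subsequences $z[A_e]$ for $A_e\in\Pi'$ are read off a sequence drawn from $P_{\Pi|\Pi'}$; the relevant statement is the ergodicity of the restricted process on the common refinement $\mathcal{F}^*$, established inside the paper's proof of Theorem \ref{theorem:convergencemodel}, and your recurrence arguments go through verbatim once referred to that chain.
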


\begin{proof}

For any emitter $e \in E(\Pi')$, we denote by $\hat{p}^e$ 
and $\hat{q}^e$ the estimators that maximize   the log-likelihood function:  
\begin{eqnarray}
\label{eq:complete_log_likelihood}
\text{log}(\widehat{\mathcal{L}}_{-}^T(z^e,t^e)) & =&  \text{log}(\hat{p}^e_{z^e_{-1},z^e_0}(T)) + \text{log}(\hat{q}^e_{z^e_{-1}}(t^e_0-t^e_{-1},T)) \nonumber \\ &-& \text{log}(\widehat{R}^e_{-1}(|t^e_{-1}|-1))  \nonumber \\
& + &\sum_{i \in A_e} \sum_{j \in A_e} N^e_{i,j}(T) \text{log} \ \hat{p}^e_{i,j}(T) \nonumber \\
& + &\sum_{i \in A_e} \sum_{k \in K_i}  N^e_i(k,T)\ \text{log} \ \hat{q}^e_{i}(k,T) \nonumber \\
& + &\text{log}(\widehat{R}^e_{N^e(T)-1}(u_T))
\end{eqnarray}
with $u_T=T-t^e_{N^e(T)-1}$.

We first look at each estimator $\hat{q}^e_i$ for any symbol $i$ from $A_e$, that are all independent from  $\hat{p}^e$ in that maximization, and also independent from each other $\hat{q}^e_j$ whenever $j \neq i$. By construction, our algorithm considers all necessary delays as included in $K^{z^e_i}$ for each $i$. 
For a sufficiently large observation window $\llbracket 0,T \rrbracket$, 
we thus have, for each transition starting from symbol $i$, a delay $k \in K^{z^e_i}$ that match the delay of the observed transition, and thus one different parameter $\hat{q}^e_i(k,T)>0$.   
Since the sequence is sampled from a process that guarantees the feasibility for $P_{\Pi'}$, we know that two symbols from $A_e$ cannot be emitted at the same time, which discards $k=0$. 

Recalling $\widehat{R}^e_i(x) = P(t^e_{i+1} - t^e_i > x|z_i)$, we can write : 
\begin{equation}
\widehat{R}^e_{-1}(|t^e_{-1}|-1) = \sum_{k \in K^{z^e_{-1}}, k >  |t^e_{-1}|-1} \hat{q}^e_{z^e_{-1}}(k,T).
\end{equation}
and 
\begin{equation}
\widehat{R}^e_{N^e(T)-1}(u_T) = \sum_{k \in K^{z^e_{N^e(T)-1}}, k > u_T} \hat{q}^e_{z^e_{N^e(T)-1}}(k,T).
\end{equation}
For a given symbol $i$ and a given delay $k \in K^{z^e_i}$, we note that $\hat{q}^e_i(k, T)$ can be part of the expression of $\widehat{R}^e_{-1}(|t^e_{-1}|-1)$ (if $z^e_{-1}=i$ and $k > |t^e_{-1}|$) or $\widehat{R}^e_{N^e(T)-1}(u_T)$ (if $z^e_{N^e(T)-1}=i$ and $k > u_T$).

Using of the same method as in  the proof for  Proposition 1 with these additional factors, we obtain the following equation by canceling the derivative of the Lagrangian expression w.r.t. $\hat{q}^e_i(k, T)$: 
\begin{equation}
\lambda_i=-\xi^e_{-1}(k,i) 
+ \frac{I(k=t^e_0-t^e_{-1}) + N^e_i(k,T)}{\hat{q}^e_i(k, T)} + \xi^e_u(k,i) 
\end{equation}

where I(.) is the indicator function that equals 1 if the argument is true, 0 otherwise, and
\begin{equation}
\nonumber 
\xi^e_{-1}(k,i)=
\begin{cases}
    \frac{1}{\widehat{R}^e_{-1}(|t^e_{-1}|-1)} & \text{if }  z^e_{-1}=i \ \text{and}\  k\geq |t^e_{-1}| \\
    0 & \text{otherwise}, 
\end{cases}
\end{equation}
\begin{equation}
\nonumber 
\xi^e_u(k,i) =
\begin{cases}
    \frac{1}{\widehat{R}^e_{N^e(T)-1}(u_T)} & \text{if }  z^e_{N^e(T)-1}=i \ \text{and} \   k > u_T \\
    0 & \text{otherwise}. 
\end{cases}
\end{equation}

Next, since $\xi^e_{-1}(k,i)$ and $\xi^e_u(k,i)$ are both upper-bounded by $\frac{1}{\hat{q}^e_i(k, T)}$, we observe that $\lambda_i$ is bounded as: 
$$\frac{N^e_i(k,T)+2}{\hat{q}^e_i(k, T)} \geq \lambda_i \geq \frac{N^e_i(k,T)-1}{\hat{q}^e_i(k, T)}.$$
Since $\hat{q}^e_i(k, T) \leq 1$, we know that $\lambda_i \geq N^e_i(k,T)-1$. We also deduce that
$$\frac{N^e_i(k,T)+2}{\lambda_i} \geq  \hat{q}^e_i(k, T) \geq \frac{N^e_i(k,T)-1}{\lambda_i}.$$
Considering that sequences are sampled from $P_{\Pi|\Pi'}$, and since we know that the process is ergodic from assumption $\cal K$, we have for every symbol $i$ and delay in $k \in K^e_i$: 
$$\frac{N^e_i(k,T)}{T} \rightarrow \frac{q^*_{i}(k)}{\delta^*_{i}}  \text{  a.a.s. as } T \rightarrow \infty$$
with $\delta^*_{i}$ the expected delay between two occurences of symbol $i$ in the sequence and $q^*_{i}(k)$ the true  probability of delay $k$ from symbol $i$ in the process $P_{\Pi|\Pi'}$. Thus, since $\delta^*_{i}$ and $q^*_{i}(k)$ are stationary, we get that both $N^e_i(k,T)$ and $\lambda_i$ increase with $T$. We obtain that both bounds of $\hat{q}^e_i(k, T)$ converge in probability to $\frac{N^e_i(k,T)}{\lambda_i}$, thus: $\hat{q}^e_i(k, T) - \frac{N^e_i(k,T)}{\lambda_i} \rightarrow 0$ a.a.s as $T \rightarrow \infty$.

From the Karush–Kuhn–Tucker conditions, we know that:  $\lambda_i (1-\sum_{l \in K^{i}} \hat{q}^e_i(l, T)) =0$, thus $\lambda_i$ converges in probability toward $\sum_{l \in K^{i}} N^e_i(l,T)$ when $T$ goes toward infinity.  
Finally, we get that: $(\hat{q}^e_i(k, T)  - \frac{N^e_i(k,T)}{\sum_{l \in K^{i}} N^e_i(l,T)}) \rightarrow 0$ a.a.s as $T\rightarrow \infty$, 
which corresponds to the estimator obtained from the maximization of the approximation $\widehat{\mathcal{L}}^T_{\sim,\Pi'}(z,t)$. 

Demonstrating the asymptotic equivalence of optima for $\hat{p}^e$ estimators is more direct. We can simply note that both likelihoods only differ from the first transition, that is ignored from $\widehat{\mathcal{L}}^T_{\sim,\Pi'}(z,t)$. The only possible  difference thus can lies on $\hat{p}^e_{z^e_{-1},z^e_0}(T)$, whose estimation from  $\widehat{\mathcal{L}}_{-,\Pi'}^T(z,t)$ gives $\frac{N^e_{z^e_{-1},z^e_0}(T)+1}{N^e_{z^e_{-1}}(T)+1}$, while we obtain $\frac{N^e_{z^e_{-1},z^e_0}(T)}{N^e_{z^e_{-1}}(T)}$ from the maximization of  $\widehat{\mathcal{L}}^T_{\sim,\Pi'}(z,t)$. Similarly as above, we can simply note that, for any $i \in A^e$ and $j \in A^e$:
$$\frac{N^e_i(T)}{T} \rightarrow \frac{1}{\delta^*_{i}}\ \text{a.a.s. as}\ T \rightarrow \infty$$

and: 
$$\frac{N^e_{i,j}(T)}{T} \rightarrow \frac{p^*_{i,j}}{\delta^*_{i}}  \text{  a.a.s. as } T \rightarrow \infty$$
with $p^*_{i,j}$ the stationary probability for emitting $j$ from $i$ in process $P_{\Pi|\Pi'}$. Thus, as all components except $T$ are constants, we get, recalling $\hat{p}^e_{z^e_{-1},z^e_0}(T) = \frac{N^e_{z^e_{-1},z^e_0}(T)+1}{N^e_{z^e_{-1}}(T)+1}$ :   $$\frac{N^e_{z^e_{-1},z^e_0}(T)+1}{N^e_{z^e_{-1}}(T)+1} - \frac{N^e_{z^e_{-1},z^e_0}(T)}{N^e_{z^e_{-1}}(T)} \rightarrow 0 \text{  a.a.s. as } T \rightarrow \infty$$

Thus, we know that all estimators from both optimization converge asymptotically toward the same optima. We directly obtain that:
\begin{equation}
\frac{\text{log}(\widehat{\mathcal{L}}_{-,\Pi'}^T(z,t)) - \text{log}(\widehat{\mathcal{L}}^T_{\sim,\Pi'}(z,t))}{T} \rightarrow 0 \end{equation}
\noindent a.a.s. as $T \rightarrow \infty$, since every component from $\widehat{\mathcal{L}}_{-,\Pi'}^T(z,t)$ not present in $\widehat{\mathcal{L}}^T_{\sim,\Pi'}(z,t)$ converge to constant values depending on stationary true parameters of the process $P_{\Pi|\Pi'}$.

\end{proof}

Now using Theorem 1 and 
Propositions  \ref{prop_ergodic_global}, \ref{prop:likelihood_global_fsm} and \ref{prop:approx_likelihood_max} we prove Theorem 2.

\begin{proof} 
For this proof we took inspiration from the proof of  Lemma 10 in \cite{seroussi2012deinterleaving} and we use the representation of the generative model $G_{\Pi}$ as a global ergodic FSM ${\cal F}_\Pi$, as  described above. 
Let $\Pi^{'}$ be a partition of $\mathcal{A}$ such that $\Pi^{'} \neq \Pi$. 

Let respectively $\mathcal{F}_{\Pi}=(S_\Pi, s^\Pi_0, f_\Pi)$ and $\mathcal{F}_{\Pi'}=(S_{\Pi'}, s^{\Pi'}_0, f_{\Pi'})$ be the FSM representing  $P_{\Pi} = G_{\Pi}(\{P^e\}_{e \in E(\Pi)})$ and  $P_{\Pi'} = G_{\Pi'}(\{P^e\}_{e \in E(\Pi')})$. Let $P_{\mathcal{F}_{\Pi}}$ and $P_{\mathcal{F}_{\Pi'}}$ be the respective probability distribution defined on states of the respective FSM $\mathcal{F}_{\Pi}$ and  $\mathcal{F}_{\Pi'}$ to represent the processes. 

First of all, let  $\mathcal{F}^+=(S^+, s^+_0, f^+)$ be a common refinement of $\mathcal{F}_{\Pi}$ and $\mathcal{F}_{\Pi'}$, such that there exist functions $g_\Pi: S^{+} \rightarrow S_\Pi$ and $g_{\Pi'}: S^{+} \rightarrow S_{\Pi'}$ allowing to recover states of both processes from $S^+$. That is,  for any sequence $(z,t)$ observed until time $T$ and its corresponding state sequence $(s^+_t)_{t=0}^T$, the respective state sequences $\left(s_t^\Pi\right)_{t=0}^T$ and $\left(s_t^{\Pi'}\right)_{t=0}^T$ satisfy $s_t^\Pi=g_\Pi\left(s_t^{+}\right)$ and $s_t^{\Pi'}=g_{\Pi'}\left(s_t^{+}\right)$, for any $t = 0 \dots T$.  
According to \cite{seroussi2012deinterleaving}, it is always possible to construct a common refinement of two FSMs,  whose state set $S^+$ is the Cartesian product of the two respective state sets.  
Thus, for any state $s_t^+ \in S^+$, we consider $s_t^+=(s_t^\Pi,s_t^{\Pi'})$, with $s_t^\Pi \in {\cal S}^\Pi$ and $s_t^{\Pi'} \in {\cal S}^{\Pi'}$. 

At any time step $t$, $s_t^\Pi \in {\cal S}_\Pi$ (resp. $s_t^{\Pi'} \in {\cal S}_{\Pi'}$) can be recovered from $s_t^+$, by considering the function $g_\Pi$ (resp. $g_{\Pi'}$) that selects the corresponding part to $\Pi$ (resp. $\Pi'$) from $s_t^+$.  

In the following, we note $P_{{\cal F}}^+$ the adaptation of the probabilities  $P_{\cal F}$ to the states of the FSM ${\cal F}^+$, that is: $\forall s_i^+ \in {{\cal S}^+}, \forall a \in A^*_\Pi, P^+_{{\cal F}_\Pi}(a|s_i^+)= P_{{\cal F}_\Pi}(a|g_\Pi(s_i^+))$. Thus,  $\forall (s_i^+,s_j^+) \in {{\cal S}^+}\times{{\cal S}^+}, P^+_{{\cal F}_\Pi}(s_j^+|s_i^+)=\sum_{a \in A^*_\Pi} P_{{\cal F}_\Pi}(a|g_\Pi(s_i^+)) I(f^+(s_i^+,a)=s_j^+)$, with $I(.)$ the indicator  function. 
One notes that to make $P^+_{{\cal F}_\Pi}$ irreducible, we consider in the following a set $S^+$ only containing states that are reachable from the process $P_\Pi$. 
Thus, ${\cal S}^+$ is a finite state space, since every transmitter owns a finite maximal emission delay. Also, as $P_{\mathcal{F}_{\Pi}}$ is ergodic according to Proposition \ref{prop_ergodic} under assumptions $(\mathcal{P})$, $(\mathcal{Q})$ and $(\mathcal{K})$,  and $P^+_{{\cal F}_\Pi}$ is ergodic under the same assumptions.

In that setting, the asymptotic normalized Kullbak-Liebler divergence 
relative to $P_\Pi$ from $P_{\Pi'}$ can be written as: 

\begin{eqnarray}
\label{Pins}
D(P_{\Pi}||P_{\Pi'}) & = & 
D(P^+_{\mathcal{F}_{\Pi}}||P^+_{\mathcal{F}_{\Pi'}})  \\
& = & \sum_{s \in S^+} P^+_{\mathcal{F}_{\Pi}}(s)D(P^+_{\mathcal{F}_{\Pi}}(.|s) || P^+_{\mathcal{F}_{\Pi'}}(.|s)), \nonumber
\end{eqnarray}

\noindent with $D(P^+_{\mathcal{F}_{\Pi}}(.|s) || P^+_{\mathcal{F}_{\Pi'}}(.|s))$ the Kullback-Leibler divergence of the  conditional distribution   $P^+_{\mathcal{F}_{\Pi}}(.|s)$ from $P^+_{\mathcal{F}_{\Pi}'}(.|s)$. This provides us a powerful tool to compare the processes.

However, as already mentioned above, impossible actions for $P_{\Pi'}$ can be emitted from $P_\Pi$ when for instance an emission delay is lower (resp. greater) than the minimal (resp. maximal) delay for the corresponding transmitters from $\Pi'$. 

\begin{itemize} 
\item First, regarding maximal delay, when following $P_\Pi$ it may happen that a symbol $z$ from a transmitter $e$ of $\Pi$ has not been observed since a long time $d$, because for instance other symbols from $e$ were repeated many times after its last occurrence. 

\item On the other hand, actions can be blocked because they correspond to the simultaneous emission of symbols that shouldn't be observed at the same timestep, due to the fact they belong to the same sub-alphabet of $\Pi'$.
\end{itemize}

Such cases result in   an undefined KL divergence, with some impossible transitions or incompatible states for $\Pi'$. Considering sufficiently long time windows, such bad events occur with probability 1 for every $\Pi' \neq \Pi$, for sequences generated from $P_\Pi$, due to the incompatibility result stated in theorem 1.  As explained in section III-B, such situation gets $C^T_{\Pi'}(z,t):=+\infty$ when detected, which allows our algorithm to discard the corresponding partition. This respects the positive difference from the theorem, as $C^T_{\Pi}(z,t)$ always remain defined after a sufficiently large $T$. 

However, such bad events can occur very rarely for some scattered sets of emission delays. In the following, we show that even for sequences feasible for both process $P_\Pi$ and $P_\Pi'$, our algorithm is able to correctly identify the best partition, after a sufficiently long period of time $T$. 
For this, we consider sequences sampled from $P_{\Pi|\Pi'}$, which corresponds to a process  restricting $P_{\Pi}$ to the subset of possible sequences for $P_{\Pi'}$.

Thus, we consider an alternative FSM ${\cal F}^*=(S^*, s_0^*, f^*)$, which corresponds to the common refinement ${\cal F}^+$ in which we remove from $f^+$ all  transitions which are either impossible regarding $\Pi$ or $\Pi'$, or that lead to any $s \in {\cal S}^+$ from which every sequence to come back in $s$ implies impossible actions  regarding $\Pi$ or $\Pi'$.  ${\cal S}^* \subset {\cal S}^+$ contains only states that are reachable following possible transitions for both processes, starting from any state from ${\cal S}^*$. 
 By definition, $s \in {\cal S}^*$ implies there exists a set 
 of trajectories in ${\cal F}^*$ that allows us to leave and go back to $s$. 
In that FSM, following process $P_{\Pi|\Pi'}$ corresponds to 
considering $P^*_{{\cal F}_\Pi}$ as the probabilities from  $P^+_{{\cal F}_\Pi}$ adapted to ${\cal F}^*$, where the probability mass of actions leading deterministically outside ${\cal S}^*$ from any state in ${\cal S}^*$ are evenly redistributed on other admissible actions regarding $f^*$. 
Using this, we have: 
\begin{itemize}
\item 
By definition,  starting from a state  from ${\cal S}^*$, there always exists an alternative to the simultaneous emission of symbols from the same transmitter in $E(\Pi')$. These symbols can be emitted from their corresponding transmitters in $E(\Pi)$ at different timesteps. When such a delayed  emission occurs, the corresponding transmitter recovers its whole set of possible delays, with gcd of 1, hence ensuring aperiodicity of the state in ${\cal F}^*$.
   \item 
    By construction, as ${\cal S}^*$ is a finite set,  there always exists a finite $k_{max}$ sufficiently high to  maintain aperiodicity of any state from ${\cal S}^*$ despite 
    blocked transitions due to maximal delays.
\end{itemize}
Thus, if ${\cal S}^*$ is non-empty (some sequences can be generated by both processes when starting from $s_0^\Pi$),  $P^*_{{\cal F}_\Pi}$ is ergodic under the same assumptions as above for the ergodicity of $P_{{\cal F}_\Pi}$ on ${\cal S}$.
In that setting, the KL divergence relative to $P_{\Pi|\Pi'}$ from $P_{\Pi'}$ can be given as 
\begin{eqnarray}
\label{Pins}
D(P_{\Pi|\Pi'}||P_{\Pi'}) & = & 
D(P^*_{\mathcal{F}_{\Pi}}||P^*_{\mathcal{F}_{\Pi'}}) \\ & =  & D(P^*_{\mathcal{F}_{\Pi}}||P^+_{\mathcal{F}_{\Pi'}})  \\
& = & \sum_{s \in S^*} P^*_{\mathcal{F}_{\Pi}}(s)D(P^*_{\mathcal{F}_{\Pi}}(.|s) || P^*_{\mathcal{F}_{\Pi'}}(.|s)) \nonumber
\end{eqnarray}

Let $(z,t)$ be sequence of $n$ symbols  $z_l \in \mathcal{A}$ with their corresponding time of arrival $t_l \in \mathbb{N}$, generated by $P_{\Pi|\Pi'}$ 
and observed in the time window $ \llbracket 0,T \rrbracket$. In this sequence, each transmitter $e$, generates a sequence $(z^e_l, t^e_l)_{l = 0, \dots, n^e-1}$ of $n^e$ symbols  $z^e \in A_e$ with their corresponding time of arrival $t^e_l \in \mathbb{N}$. Let $\mathbf{s}^T = (\mathbf{s}_t)_{t=0, \dots, T}$ be the sequence of $T+1$  vector states $\mathbf{s}_t \in S^*$, corresponding to the generation of $(z,t)$ with  $P^*_{{\cal F}_{\Pi}}$ on ${\cal F}^*$.

Let $\hat{P}^*_{\mathcal{F}_\Pi}(\mathbf{s}^T)$ be the probability estimator that maximizes  the log-likelihood  $\log(\hat{L}^{T,*}_{\mathcal{F}_\Pi}(\mathbf{s}^T))))$, 
with corresponding empirical entropy $H^{T,*}_{\mathcal{F}_\Pi}(\mathbf{s}^T)  = - \log(\hat{L}^{T,*}_{\mathcal{F}_\Pi}(\mathbf{s}^T))))$. We have  

\begin{equation}
 H^{T,*}_{\mathcal{F}_\Pi}(\mathbf{s}^T)
   = - \sum_{s,s' \in \mathbf{s}^T} N_{s,s'}(T) \text{log} \ \hat{P}^*_{\mathcal{F}_\Pi}(s'|s),
\end{equation}
\noindent with $N_{s,s'}(T)$ the number of transitions from state $s \in S^*$ to state $s' \in S^*$ observed in the sequence $\mathbf{s}^T$ until time $T$. We have: 
\begin{equation}
\frac{1}{T} (H^{T,*}_{\mathcal{F}_{\Pi'}}(\mathbf{s}^T) - H^{T,*}_{\mathcal{F}_\Pi}(\mathbf{s}^T)) 
 =  \sum_{s \in \mathbf{s}^T} N_{s}/T \sum_{s' \in \mathbf{s}^T} [N_{s,s'}/N_{s}] \log [\hat{P}^*_{\mathcal{F}_\Pi}(s'|s)/ \hat{P}^*_{\mathcal{F}_{\Pi'}}(s'|s)]
\end{equation}

As $\hat{P}^*_{\mathcal{F}_\Pi}(\mathbf{s}^T)$ is the probability estimator that maximizes the likelihood of $\mathbf{s}^T$, we have  $N_{s}/T = \hat{P}^*_{\mathcal{F}_\Pi}(s)$ and $N_{s,s'}/N_{s} = \hat{P}_{\mathcal{F}_\Pi}^*(s'|s)$, since $\mathbf{s}^T$ was generated following $P_{\Pi|\Pi'}$. 
Thus: 
\begin{align}\label{eq:entropyF}
\frac{1}{T} (H^{T,*}_{\mathcal{F}_{\Pi'}}(\mathbf{s}^T) - H^{T,*}_{\mathcal{F}_\Pi}(\mathbf{s}^T)) 
&=\sum_{s \in \mathbf{s}^T} \hat{P}^*_{\mathcal{F}_\Pi}(s) \sum_{s' \in \mathbf{s}^T} \hat{P}_{\mathcal{F}_\Pi}^*(s'|s) \log [\hat{P}_{\mathcal{F}_\Pi}^*(s'|s)/ \hat{P}_{\mathcal{F}_{\Pi'}}^*(s'|s)] \nonumber \\
&=  D(\hat{P}^*_{\mathcal{F}_\Pi} || \hat{P}^*_{\mathcal{F}_{\Pi'}}).
\end{align}

Next, as  $H^{T,*}_{\mathcal{F}_\Pi}(\mathbf{s}^T)  = - \log(\hat{L}^{T,*}_{\mathcal{F}_\Pi}(\mathbf{s}^T))$ and $H^{T,*}_{\mathcal{F}_{\Pi'}}(\mathbf{s}^T)  = - \log(\hat{L}^{T,*}_{\mathcal{F}_{\Pi'}}(\mathbf{s}^T))$, by using  \eqref{eq:entropyF} and Proposition \ref{prop:likelihood_global_fsm},  we obtain: 

\begin{align}
\frac{1}{T} (-\text{log}(\widehat{\mathcal{L}}^T_{-,\Pi'}(z,t)) +\text{log}(\widehat{\mathcal{L}}^T_{-,\Pi}(z,t)))
=  D(\hat{P}^*_{\mathcal{F}_\Pi} || \hat{P}^*_{\mathcal{F}_{\Pi'}}).
\end{align}

And thus, 
\begin{align}
\label{eq:lik_entropy}
\frac{1}{T} (-\text{log}(\widehat{\mathcal{L}}^T_{\sim,\Pi'}(z,t)) +\text{log}(\widehat{\mathcal{L}}^T_{\sim,\Pi}(z,t)))
&=   D(\hat{P}^*_{\mathcal{F}_\Pi} || \hat{P}^*_{\mathcal{F}_{\Pi'}}) \nonumber \\
& + \frac{1}{T} (\text{log}(\widehat{\mathcal{L}}^T_{-,\Pi'}(z,t)) -\text{log}(\widehat{\mathcal{L}}^T_{\sim,\Pi'}(z,t))) \nonumber \\
& - \frac{1}{T} (\text{log}(\widehat{\mathcal{L}}^T_{-,\Pi}(z,t)) -\text{log}(\widehat{\mathcal{L}}^T_{\sim,\Pi}(z,t))).
\end{align}

Now from \eqref{eq:lik_entropy} and Proposition \ref{prop:approx_likelihood_max}, we deduce that the quantity $\frac{1}{T} ( H^T_{\Pi'}(z,t) -  H^T_{\Pi}(z,t))$ converges toward $D(\hat{P}^*_{\mathcal{F}_\Pi} || \hat{P}^*_{\mathcal{F}_{\Pi'}})$ in probability when $T$ goes toward infinity.  So does $\frac{1}{T} ( C^T_{\Pi'}(z,t) - C^T_{\Pi}(z,t)$, with  $C^T_{\Pi}(z,t) =  H^T_{\Pi}(z,t) + \gamma m \text{log} (n)$,  
as the term 
$\frac{\gamma m \text{log} (n)}{T}$ goes toward 0 as $T$ goes toward infinity, as $\gamma$ and $m$ are  positive constants 
and  $n \leq T$. 

Since the sequence  $\mathbf{s}^T$ is generated 
with probability $P^*_{\mathcal{F}_{\Pi}}$, we have $D(    P^*_{\mathcal{F}_{\Pi}} || \hat{P}^*_{\mathcal{F}_\Pi} ) \rightarrow 0$ 

as $T \rightarrow \infty$. It thus remain to demonstrate that  $D(P^*_{\mathcal{F}_{\Pi}}||\hat{P}^*_{\mathcal{F}_{\Pi'}})>0$ to conclude the proof.

Since $\Pi'$ is, by 
assumption different from $\Pi$, and thus following Theorem 1 incompatible with $P_\Pi$, no valid assignment of parameters for ${\cal F}_{\Pi'}$ can generate $P_{\Pi|\Pi'}$, and, thus, $P^*_{{\cal F}_\Pi}$ is not in the set  of valid parameters $V(\Pi') \subset V^*$ for ${\cal F}^*_{\Pi'}$, with $V^*$ the whole set of valid parameters in ${\cal F}^*$. 
Said otherwise,   we can distinguish two possible settings: 
\begin{itemize}
    \item At least two symbols  $z$ and $z'$  belong to the same sub-alphabet $A_{e_\Pi}$ in $\Pi$ and to two respective different sub-alphabets $A_{e_{\Pi'}}$ and $A_{e'_{\Pi'}}$ in $\Pi'$. For any $s \in {\cal S}^*$, such that $s_\Pi^{e_\Pi}=(z',d')$, with $s_\Pi=g_\Pi(s)$, and $s_{\Pi'}^{e_{\Pi'}}=(z,d)$, with $s_{\Pi'}=g_{\Pi'}(s)$ (i.e., we have $d>d'$),  $D(P^*_{\mathcal{F}_{\Pi}}(.|s) || \hat{P}^*_{\mathcal{F}_{\Pi'}}(.|s))=0$ implies  $P^*_{{\cal F}_\Pi}(a_{\Pi}^{e_\Pi}=z|s) = P^*_{{\cal F}_{\Pi'}}(a_{\Pi'}^{e_{\Pi'}}=z|s) $ for any $(a_{\Pi},a_{\Pi'}) \in A^*_\Pi \times A^*_{\Pi'}$ such that $P^*_{{\cal F}_{\Pi}}(a_{\Pi}|s)>0$. This cannot be true for every such state. Two different states $(s,s') \in {{\cal S}^*}^2$ can be equal on $e_\Pi$ (i.e, $s_\Pi^{e_\Pi}  = {s'}_\Pi^{e_{\Pi}} $), while having a different value for $e_{\Pi'}$ (e.g., $s_{\Pi'}^{e_{\Pi'}}=(z,d)$ and ${s'}_{\Pi'}^{e_{\Pi'}}=(z,d')$, with $d \neq d'$).  From the definition of the probabilities of actions, which are based on the instantaneous risk of  emission, we have: $P^*_{{\cal F}_{\Pi'}}(a_{\Pi'}^{e_{\Pi'}}|s) \neq  P^*_{{\cal F}_{\Pi'}}(a_{\Pi'}^{e_{\Pi'}}|s') $.  At the same time, since $P^*_{{\cal F}_{\Pi}}(a_{\Pi}^{e_{\Pi}}|s) = P^*_{{\cal F}_{\Pi}}(a_{\Pi}^{e_{\Pi}}|s')$, we can conclude that there exists some state $s \in {\cal S}^*$, such that  $P^*_{{\cal F}_\Pi}(a_{\Pi}^{e_\Pi}|s) \neq P^*_{{\cal F}_{\Pi'}}(a_{\Pi'}^{e_{\Pi'}}|s) $.
    \item At least two symbols  $z$ and $z'$  belong to two respective  different sub-alphabets $A_{e_{\Pi}}$ and $A_{e'_{\Pi}}$ in $\Pi$ and to  the same sub-alphabet $A_{e_{\Pi'}}$ in $\Pi'$. Same manner as above,   $P^*_{{\cal F}_\Pi}(a_{\Pi}^{e_\Pi}=z|s)$ cannot be equal to $P^*_{{\cal F}_{\Pi'}}(a_{\Pi'}^{e_{\Pi'}}=z|s) $ for all $s$ such that $s_\Pi^{e_\Pi}=(z,d)$, with $s_\Pi=g_\Pi(s)$, and $s_{\Pi'}^{e_{\Pi'}}=(z',d')$, with $s_{\Pi'}=g_{\Pi'}(s)$. 
\end{itemize}

Thus, $D(P^*_{\mathcal{F}_{\Pi}}||\hat{P}^*_{\mathcal{F}_{\Pi'}})>0$ for any estimation of probabilities in $\mathcal{F}_{\Pi'}$, since  $D(P^*_{\mathcal{F}_{\Pi}}||\hat{P}^*_{\mathcal{F}_{\Pi'}})=0$ only if $\forall s \in {\cal S}^*$ such that $P^*_{\mathcal{F}_{\Pi}}(s)>0$  we have $D(P^*_{\mathcal{F}_{\Pi}}(.|s) || \hat{P}^*_{\mathcal{F}_{\Pi'}}(.|s))=0$, which is impossible for  $\Pi \neq \Pi'$ as demonstrated above. Thus
\begin{equation}
 \underset{T \rightarrow \infty}{\text{lim}} \mathbb{P}_{(z,t) \sim P_{\Pi}}(
        \frac{1}{T} ( C^T_{\Pi'}(z,t) - C^T_{\Pi}(z,t)) >  0) = 1,
\end{equation}
\noindent which concludes the proof of Theorem 2. 

\end{proof}

 \section{Details of the procedures used in the memetic algorithm for alphabet partitioning\label{appendix_MAAP}}

In this section, we describe in more detail the local search (TabuAP) and the crossover
(GLPX) procedures used in Algorithm 1 (see Section III-C of the main paper).

\subsection{TabuAP local search procedure}

TabuAP for alphabet partitioning is adapted from \cite{MAAP_2023}. Starting  from an initial partition $\Pi$, it then iteratively replaces the current solution $\Pi$ by a neighboring solution $\Pi^{'}$ taken from its one-move neighborhood $N(\Pi)$ until it reaches a maximum of $nb_{iter}$ iterations.

A neighboring solution of a partition $\Pi = \cup_{e=1}^m A_e$ is generated by using the \textit{one-move} operator, which displaces a symbol $a \in A_e$ to a different sub-alphabet $A_f$, such that $f \neq e$. A symbol $a \in A_e$ is allowed to be transferred to an existing group $A_f$ for $f=1,\dots,m$, with $f \neq e$, or to a new group $A_{m+1}$. We use $\Pi\ \oplus <a,A_e,A_f>$ to denote the resulting neighboring partition after performing a one-move operation. 

At each iteration, TabuAP  examines the neighborhood and selects a  best admissible neighboring solution $\Pi'$ to replace $\Pi$. A neighboring solution $\Pi\ \oplus <a,A_e,A_f>$ built from $\Pi$ is admissible if the associated one-move $<a, A_e, A_f>$ was not registered in the tabu list, which records the most recently performed moves. Each time a move is performed, its reversed move is added in the tabu list and forbidden during the $t = r(10) + \alpha |\mathcal{A}|$ next iterations (tabu tenure) where $r$ is a  random number uniformly drawn in $1,...,10$ and $\alpha$ is a hyperparameter of the algorithm set to  $0.6$. 

In order to identify the best admissible partition in the neighborhood, we computer the global penalized scores given by (9) of all the neighboring partitions $\Pi\ \oplus <a,A_e,A_f>$ and retain the 
neighboring partition with the lowest score. For a move $<a,A_e,A_f>$ applied to the current partition $\Pi$ and resulting in a new partition $\Pi' = \Pi\ \oplus <a,A_e,A_f>$,  only the negative log-likelihood scores of the changing groups $A_e$ and $A_f$ need to be reevaluated.

\subsection{Greedy Likelihood-based Crossover Operator}

The GLPX crossover, adapted from \cite{MAAP_2023}, aims to transmit to the offspring the largest possible $A_e$ sub-alphabets from both parents with as the lowest entropy as possible,  because our problem is to minimize the global negative log-likelihood of the partition over the whole alphabet.

A GLPX score associated to a group of letters $A_e$ is defined as 
$\frac{H^T(z^e,t^e)}{|A_e|}$, where $H^T(z^e,t^e)$ is the negative log likelihood for transmitter $e$ given by (4) and $|A_e|$ is the size of the sub-alphabet $A_e$.

Given two parent partitions $\Pi_1$ and $\Pi_2$, the GLPX procedure performs two steps. First, it transmits to the offspring the sub-alphabet  with the lowest GLPX score. After having withdrawn the letters of this sub-alphabet in both parents and recomputed all scores, it transmits to the offspring the sub-alphabet  with the lowest GLPX score of the second parent. This procedure is repeated until all the letters  of the alphabet $\mathcal{A}$ are assigned to the offspring. Note that this crossover is asymmetrical. Starting the procedure with parent $\Pi_1$ or parent $\Pi_2$ can produce different offspring partitions. Therefore when used in the MAAP algorithm described in Algorithm 1 to generate two new offspring partitions  $\Pi_{1} = GLPX(\Pi_{1}', \Pi_{2}', (z,t))$ and $\Pi_{2} = GLPX(\Pi_{2}', \Pi_{1}',(z,t))$, the two offspring partitions  $\Pi_{1}$ and $\Pi_{2}$ can be very different (in the sense of edit distance).



\end{document}